\documentclass{article}

\usepackage[final]{corl_2023} 
\usepackage[utf8]{inputenc} 
\usepackage[T1]{fontenc}    
\usepackage{hyperref}       
\usepackage{url}            
\usepackage{booktabs}       
\usepackage{amsfonts}       
\usepackage{nicefrac}       
\usepackage{microtype}      
\usepackage{amsmath}
\usepackage{amsfonts}
\usepackage{amsthm}
\usepackage{thmtools}
\usepackage{amssymb}
\usepackage{ulem}
\usepackage{mathtools}
\usepackage{amsfonts}
\usepackage{dsfont}
\usepackage{tikz}
\usepackage{graphicx}
\usepackage{algpseudocode}
\usepackage{algorithm}
\usepackage{soul}
\usepackage{indentfirst}
\usepackage{centernot}
\usepackage{bbm}
\usepackage{sectsty}
\usepackage{xcolor}
\usepackage{enumitem}
\usepackage{cleveref}
\usepackage{appendix}
\usepackage[numbers]{natbib}
\usepackage[compact]{titlesec}
\titlespacing{\section}{5pt}{5pt}{5pt}

\newtheorem{theorem}{Theorem}[]

\newtheorem{lemma}[theorem]{Lemma}
\newtheorem{proposition}[theorem]{Proposition}

\newtheorem{definition}{Definition}[section]

\theoremstyle{remark}

\DeclareMathOperator*{\argmin}{argmin}

\newcommand{\empdist}{\hat{p}(x;\mathcal{D})}
\newcommand{\pertempdist}{p_\sigma(x;\mathcal{D})}
\newcommand{\pertempdistp}{p_\sigma(x';\mathcal{D})}

\newcommand{\sdistsigma}{d_\sigma(x;\mathcal{D})}

\newcommand\nnfootnote[1]{%
  \begin{NoHyper}
  \renewcommand\thefootnote{}\footnote{#1}%
  \addtocounter{footnote}{-1}%
  \end{NoHyper}
}
\title{Fighting Uncertainty with Gradients: Offline Reinforcement Learning via Diffusion Score Matching}


%


\author{%
  H.J. Terry Suh\\
  CSAIL, MIT\\
  Cambridge, MA 02139 \\
  \texttt{hjsuh@mit.edu} \\
  \And
  Glen Chou$^*$ \\
  CSAIL, MIT\\
  Cambridge, MA 02139 \\
  \texttt{gchou@mit.edu}  \\ 
  \And
  Hongkai Dai$^*$ \\
  Toyota Research Institute\\
  Los Altos, CA 94022 \\
  \texttt{hongkai.dai@tri.global}  \\ 
  \And
  Lujie Yang$^*$ \\
  CSAIL, MIT \\
  Cambridge, MA 02139 \\
  \texttt{lujie@mit.edu} 
  \And
  Abhishek Gupta \\
  University of Washington\\
  Seattle, WA 98195 \\
  \texttt{abhgupta@cs.washington.edu}  \\ 
  \And
  Russ Tedrake \\
  CSAIL, MIT\\
  Cambridge, MA 02139 \\
  \texttt{russt@mit.edu}  \\ 
 \\ 
}

\begin{document}
\maketitle

\begin{abstract}
Gradient-based methods enable efficient search capabilities in high dimensions. However, in order to apply them effectively in offline optimization paradigms such as offline Reinforcement Learning (RL) or Imitation Learning (IL), we require a more careful consideration of how uncertainty estimation interplays with first-order methods that attempt to minimize them. We study smoothed distance to data as an uncertainty metric, and claim that it has two beneficial properties: (i) it allows gradient-based methods that attempt to minimize uncertainty to drive iterates to data as smoothing is annealed, and (ii) it facilitates analysis of model bias with Lipschitz constants. As distance to data can be expensive to compute online, we consider settings where we need amortize this computation. Instead of learning the distance however, we propose to learn its gradients directly as an oracle for first-order optimizers. We show these gradients can be efficiently learned with score-matching techniques by leveraging the equivalence between distance to data and data likelihood. Using this insight, we propose Score-Guided Planning (SGP), a planning algorithm for offline RL that utilizes score-matching to enable first-order planning in high-dimensional problems, where zeroth-order methods were unable to scale, and ensembles were unable to overcome local minima. Website: \href{https://sites.google.com/view/score-guided-planning/home}{https://sites.google.com/view/score-guided-planning/home}
\end{abstract}

\keywords{Diffusion, Score-Matching, Offline, Model-Based Reinforcement Learning, Imitation Learning, Planning under Uncertainty} \vskip -0.25 true in
\nnfootnote{* These authors contributed equally to this work.}

\section{Introduction}
\vskip -0.1 true in
Uncertainty minimization is a central problem in offline optimization, which manifests as many different paradigms in robot learning. In offline model-based RL (MBRL \cite{atkeson,carrots,e2c,chua2018deep}), penalization of uncertainty acts as a regularizer against model bias  \cite{schneider, pilco, abbeel} and prevents the optimizer from exploiting model error \cite{mopo,pilco,pddm, morel}. In offline model-free RL, it regularizes against overestimation of the Q function \cite{fujimoto2019offpolicy,combo}. In addition, many imitation learning (IL) algorithms can be viewed as minimizing distribution shift from the demonstrator distribution \cite{divergence} . 

Despite the importance of uncertainty, statistical uncertainty quantification remains a difficult problem \cite{calibrated,accurateuncertainty,mopo,pilco,gal2016theoretically}; Gaussian processes (GPs) \cite{pilco,gpmpc,glen} rarely scale to high dimensions, and ensembles \cite{mopo,chua2018deep,mbtrpo} are prone to underestimating true uncertainty \cite{accurateuncertainty}. As such, previous works have often taken the approach of staying near the data by maximizing data likelihood. These methods either minimize distribution shift between the optimized and data distribution for behavior regularization \cite{wu2019behavior,dagger,bc,cql}, or the occupation measure of the optimized policy and the data distribution \cite{fujimoto2019offpolicy,jaques2019way,ldm,diffusionplanning,reps,DBLP:conf/cdc/ChouOB21,DBLP:journals/corr/abs-2212-06874,DBLP:conf/wafr/ChouOB22}. Both of these require the likelihood of the data to be estimated. 

A promising direction for estimating the data likelihood is to leverage techniques from likelihood-based generative modeling, such as variational autoencoders (VAE) \cite{vae,vae1,vae2}, generative adversarial networks (GAN) \cite{gan,fu2018learning,gail,divergence}, and flow-based models \cite{ldm,normalizingflows}. Yet, these prior works have shown that training density models to generate accurate likelihoods can be challenging, especially for high-dimensional data. While gradient-based methods have good scalability properties that make them desirable for tackling offline optimization problems in this high-dimensional regime, the effect of incorrect likelihoods are further exacerbated in this setting as they lead gradient-based methods into spurious local minima. Thus, we ask in this work: can we design gradient-based offline optimization methods that encourage data likelihood, without explicit generative modeling of likelihoods? 

Our key insight is that in order to maximize the data likelihood with gradient-based methods, we do not need access to the likelihood itself. Rather, having access to a first-order oracle (gradients), known as the \textit{score function}, is sufficient. We claim that directly utilizing the score function has two benefits compared to likelihood-based modeling. i) First, recent breakthroughs in score-based modeling \cite{stablediffusion, diffusion, songscorematching} show that the score function is considerably easier to estimate with score-matching techniques \cite{songscorematching, diffusion}, as it bypasses estimation of the partition function that is required for computation of exact likelihoods \cite{songscorematching, chi2023diffusion}. ii) In addition, we show that score matching with annealed perturbations \cite{songscorematching} gives gradients that stably drive decision variables to land exactly on data when uncertainty is minimized with gradient-based optimization, a property we term \textit{data stability}. We demonstrate this by showing that the negative log likelihood of the perturbed empirical distribution, whose gradients score-matching estimates, is equivalent to a softened distance to data \cite{frank}.

Furthermore, we ask: when, and why, would approaches that penalize distance to data surpass the ensemble method of statistical uncertainty quantification \cite{chua2018deep, pddm}? We show that unlike empirical variance among ensembles, we can relate how much smoothed distance to data underestimates true uncertainty with the Lipschitz constant, for which we can use statistical estimation to put confidence bounds \cite{glen}, or utilize structured domain knowledge \cite{latorre2020lipschitz}. Moreover, we show that ensembles do not necessarily have the data stability property due to statistical noise; therefore, optimizing for ensemble variance can easily lead to local minima away from data in gradient-based optimization.

To put our theory into a practical algorithm, we propose Score-Guided Planning (SGP), a gradient-based algorithm that estimates gradients of the log likelihood with score matching, and solves uncertainty-penalized offline optimization problems that additively combine the cumulative reward and the log likelihood of data \textit{without} any explicit modeling of likelihood. SGP enables stable uncertainty minimization in high-dimensional problems, enabling offline MBRL to scale even to pixel action-spaces. We validate our theory on empirical examples such as the cart-pole system, the D4RL benchmark \cite{d4rl}, a pixel-space single integrator, and a box-pushing task \cite{manuelli2020keypoints} on hardware. 
\section{Preliminaries}\label{sec:prelim}
\paragraph{Offline Model-Based Optimization.} We first introduce a setting of \textit{offline model-based optimization} \cite{offlineopt}. In this setting, we aim to find $x$ that minimizes an objective function $f:\mathbb{R}^n\rightarrow\mathbb{R}$, but are not directly given access to $f$; instead, we have access to $x_i\sim p(x)$, and their corresponding values $f(x_i)$, such that the dataset consists of $\mathcal{D}=\{(x_i,f(x_i))\}$. Denoting $\hat{p}(x;\mathcal{D})$ as the empirical distribution corresponding to dataset $\mathcal{D}$, offline model-based optimization solves
\begin{equation}
\min_x f_{\theta^*}(x) \quad \text{s.t.} \quad \theta^* = \;  \text{arg}\min_\theta \mathbb{E}_{x\sim\hat{p}(x;\mathcal{D})} \left[\|f_\theta(x) - f(x)\|^2\right].
\end{equation}
In words, we minimize a surrogate loss $f_\theta(x)$, where we choose $\theta$ as the solution to empirical risk minimization of matching $f$ given the data. Denoting $x^*=\text{arg}\min_x f_{\theta^*}(x)$, one measure of performance of this procedure is error at optimality, $\|f(x^*) - f_\theta(x^*)\|$.

\paragraph{Uncertainty Penalization.}\label{sec:prelimpenalty}
The gap $\|f(x^*)-f_\theta(x^*)\|$ potentially can be large if $f_\theta$ fails to approximate $f$ correctly at $x^*$, which is likely if $x^*$ is out-of-distribution (o.o.d.). To remedy this, previous works \cite{mopo,offlineopt,offlineopt2} have proposed adding a loss term that penalizes o.o.d. regions. We denote this penalized objective as
\begin{equation}
    \bar{f}_\theta(x) \coloneqq f_\theta(x) + \beta \mu^2(x),
\end{equation}
where $\beta\in\mathbb{R}_{\geq 0}$ is some weighting parameter, and $\mu(x)$ is some notion of uncertainty. Intuitively this restricts the choice of optimal $x$ to the training distribution used to find $\theta^*$, since these are the values that can be trusted. If the uncertainty metric overestimates the true uncertainty, $\|f(x)-f_\theta(x)\|\leq \mu(x)$, it is possible to bound the error at optimality directly using $\mu(x^*)$.

\paragraph{Offline Model-Based RL.}\label{sec:prelimmbrl}
While the offline model-based optimization problem described is a one-step problem, the problem of offline model-based RL (MBRL) involves sequential decision making using an offline dataset. In offline MBRL, we are given a dataset $\mathcal{D}=\{(x_t, u_t, x_{t+1})_i\}$ where $x\in\mathbb{R}^n$ denotes the state, $u\in\mathbb{R}^m$ denotes action, $t$ is the time index and $i$ is the sample index.  Again denoting $\hat{p}(x_t,u_t,x_{t+1};\mathcal{D})$ as the empirical distribution corresponding to $\mathcal{D}$, and introducing $\mu(x_t,u_t)$ as a state-action uncertainty metric \cite{mopo}, uncertainty-penalized offline MBRL solves
    \begin{equation}
        \begin{aligned}
            \max_{x_{1:T},u_{1:T}} \quad & \textstyle\sum^T_{t=1} r_t(x_t,u_t) - \beta \mu^2(x_t,u_t) \\ 
            \text{s.t.} \quad & x_{t+1} = f_{\theta^*}(x_t, u_t) \; \forall t, \\
                        \quad & \theta^* = \text{arg}\min_\theta \mathbb{E}_{(x_t,u_t,x_{t+1})\sim \hat{p}}\left[\|x_{t+1} - f_\theta(x_t,u_t)\|^2\right].
        \end{aligned}
    \end{equation}
    
In words, we first approximate the transition dynamics from data, then solve the uncertainty-penalized optimal control problem assuming the learned dynamics. Previous works have used ensembles \cite{mopo,pddm,razto}, or likelihood-based generative models of data \cite{ldm} to estimate this uncertainty.

The resulting open-loop planning problem can either be solved with first-order methods \cite{yunzhu,adam}, or zeroth-order sampling-based methods \cite{manuelli2020keypoints,pddm} such as CEM \cite{cem} or MPPI \cite{mppi}. While insights from stochastic optimization \cite{ghadimilan} tell us that first-order methods are more favorable in high dimensions and longer horizons as the variance of zeroth-order methods scale with dimension of decision variables \cite{suh2022differentiable}, first-order methods require careful consideration of how amenable the uncertainty metric is to gradient-based optimization. 

\section{Distance to Data as a Metric of Uncertainty}\label{sec:distance}
\vskip -0.1 true in

In order to find a metric of uncertainty that is amenable for gradient-based offline optimization, we investigate smoothed distance to data as a candidate. We show that this metric allows us to drive iterates of gradient-based methods that minimize uncertainty to land on data as smoothing is annealed. In addition, it allows us to quantify how much we underestimate true uncertainty by using the Lipschitz constant of error. All proofs for theorems in this section are included in \Cref{app:proofs}.

\subsection{Properties of Distance to Data}\label{sec:properties}\vskip -0.1 true in

We first formally define our proposed metric of uncertainty for offline model-based optimization.

\begin{definition}[Distance to Data]\normalfont
Consider a dataset $\mathcal{D}=\{x_i\}$ and an arbitrary point $x\in\mathbb{R}^n$. The standard squared distance from $x$ to the set $\mathcal{D}$ can be written as $d(x;\mathcal{D})^2=\min_{x_i\in\mathcal{D}}\textstyle\frac{1}{2}\|x - x_i\|^2$. We define smoothed distance to data using a smoothed version of this standard squared distance,
\begin{equation}\label{eq:softmin}
    \sdistsigma^2 \coloneqq \mathsf{Softmin}_\sigma \textstyle\frac{1}{2}\|x - x_i\|^2 + C = -\sigma^2\log \left[\sum_i\exp \left[-\textstyle\frac{1}{2\sigma^2}\|x - x_i\|^2\right]\right] + C,
\end{equation}
where $C$ is some constant to ensure positiveness of $\sdistsigma^2$, and $\sigma>0$, also known as the \textit{temperature parameter}, controls the degree of smoothing with $\sigma\rightarrow 0$ converging to the true min.
\end{definition}

The motivation for introducing smoothing is to make the original non-smooth distance metric more amenable for gradient-based optimization \cite{duchi2012randomized, suh2022bundled}. We now consider benefits of using $\mu(x)=d_\sigma(x;\mathcal{D})$ as our uncertainty metric, and show that as the smoothing level is annealed down, minimizing this distance allows us to converge to the points in the dataset. 

\begin{proposition}[Data Stability]\normalfont \label{prop:data_stability}
     Consider a monotonically decreasing sequence $\sigma_k$ such that $\sigma_k\rightarrow 0$, and denote $x^n_k$ as the $n^{th}$ gradient descent iteration of $\min_x d_{\sigma_k}(x;\mathcal{D})^2$. Then, almost surely with random initialization and appropriate step size, we have
     \begin{equation}
         \lim_{k\rightarrow \infty} \lim_{n\rightarrow\infty} x_k^n \in \mathcal{D}.
     \end{equation}
\end{proposition}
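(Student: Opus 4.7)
The plan is to analyze the gradient descent dynamics for each fixed $\sigma_k$ and then take the smoothing limit. First, I would compute the gradient of the softmin objective. Differentiating \eqref{eq:softmin} directly gives
\begin{equation}
\nabla_x d_\sigma(x;\mathcal{D})^2 = \sum_i w_i(x)\,(x - x_i), \qquad w_i(x) = \frac{\exp(-\tfrac{1}{2\sigma^2}\|x-x_i\|^2)}{\sum_j \exp(-\tfrac{1}{2\sigma^2}\|x-x_j\|^2)},
\end{equation}
so the gradient vanishes iff $x$ equals the softmax-weighted convex combination $\sum_i w_i(x)\, x_i$. In particular, each $x_j \in \mathcal{D}$ is a fixed point, and the gradient descent update is a contraction toward such a weighted mean.

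Second, I would argue that for each fixed $\sigma_k > 0$, $d_{\sigma_k}^2$ is $C^\infty$ and has Lipschitz gradient on any bounded sublevel set. Hence, for an appropriate step size, gradient descent converges to a stationary point $x_k^\infty \coloneqq \lim_{n\to\infty} x_k^n$. Invoking the stable manifold theorem for gradient descent (Lee et al.), random initialization almost surely avoids strict saddle points, so $x_k^\infty$ is almost surely a local minimum of $d_{\sigma_k}^2$.

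Third, I would pass to the limit $k\to\infty$, i.e.\ $\sigma_k \to 0$. The softmax weights $w_i(x)$ concentrate sharply on the nearest neighbor(s) of $x$ in $\mathcal{D}$. For any subsequential limit $x^\infty = \lim_k x_k^\infty$, two cases arise. If $x^\infty$ has a unique nearest neighbor $x_j \in \mathcal{D}$, then for all sufficiently small $\sigma_k$, $w_j(x_k^\infty) \to 1$ while $w_i(x_k^\infty) \to 0$ for $i \neq j$; the stationarity condition $x_k^\infty = \sum_i w_i(x_k^\infty)\, x_i$ then forces $x^\infty = x_j \in \mathcal{D}$. If $x^\infty$ lies on a Voronoi boundary (equidistant from two or more points), a direct Hessian computation shows that $d_\sigma^2$ has a strictly negative eigenvalue in the boundary-crossing direction for all $\sigma$ below a threshold, making such points strict saddles; these are excluded almost surely by random initialization.

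The main obstacle will be the uniform saddle-avoidance argument across the sequence $\sigma_k$: one must show that the set of initializations leading to a non-data fixed point has measure zero not just for each fixed $\sigma_k$, but also after passing to the limit. A clean way to handle this is to verify that (i) for each $\sigma > 0$, the set of non-data stationary points is a finite union of strict saddles with an explicit negative Hessian direction (computable from the exponentially decaying weights), and (ii) the measure of initializations whose gradient descent trajectory approaches \emph{any} such saddle is zero, uniformly in $\sigma$, by a covering argument over the finitely many saddle manifolds. Combining these with the fixed point characterization above then yields $\lim_{k\to\infty} x_k^\infty \in \mathcal{D}$ almost surely.
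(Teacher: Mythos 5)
Your proposal follows essentially the same route as the paper's proof: both derive the stationarity condition $x_k^* = \sum_i w_i(x_k^*)\,x_i$ with softmax weights, invoke \cite{lee2016gradient} to conclude that random initialization almost surely lands on a local minimizer rather than a saddle, and then let the weights concentrate on the nearest neighbor as $\sigma_k \to 0$ to force the limit into $\mathcal{D}$. The one place you genuinely diverge is in excluding the degenerate case of a non-unique nearest neighbor: the paper's Lemma 3 argues by contradiction that if $x_k^*$ were equidistant from $m \ge 2$ data points it would converge to their centroid with a strictly positive limiting value of $d_{\sigma_k}^2$, and hence could not be a local minimizer; you instead propose a direct Hessian computation showing a negative eigenvalue in the boundary-crossing direction (indeed, $\nabla^2 d_\sigma^2 = \mathbf{I} - \sigma^{-2}\mathrm{Cov}_w(x_i)$, which is negative along $x_{j_1}-x_{j_2}$ once $\sigma$ is small relative to the data separation). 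Your version is arguably the tighter argument, since the paper's inference from ``positive limiting function value'' to ``local maximizer'' is stated without justification, whereas the strict-saddle property is exactly the hypothesis the Lee et al.\ result needs. Both arguments share the remaining subtlety you flag explicitly and the paper leaves implicit: the almost-sure saddle avoidance is per fixed $\sigma_k$, and one must still argue (e.g., via a countable union of null sets over $k$, or your covering argument) that the exclusion survives the limit $k \to \infty$.
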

\vskip -0.1 true in
We note that empirical variance among ensembles \cite{mopo} is prone to having local minima away from data due to statistical variations, especially with small number of ensembles, which makes them unreliable for gradient-based optimization. Next, we show that it is possible to analyze how much softened distance to data underestimates true uncertainty using the Lipschitz constant of the model bias $L_e$.

\begin{proposition}[Lipschitz Bounds]\label{prop:admissibility}\normalfont 
    Let $L_e$ be the local Lipschitz \cite{marsden} constant of the true error (bias) $e(x) \coloneqq \Vert f(x) - f_\theta(x)\Vert_2$ valid over $\mathcal{Z} \subseteq \mathcal{X}$, where $\mathcal{X}$ is the domain of the input $x$. Then, $e(x)$ is bounded by
    \begin{equation}
        e(x) \leq e(x_c) + \sqrt{2}L_e \sqrt{d_\sigma(x; \mathcal{D})^2 + C_2},
    \end{equation}
    for all $x \in \mathcal{Z}$, where $x_c \coloneqq \arg\min_{x_i \in \mathcal{D}}\frac{1}{2}\Vert x - x_i\Vert _2^2$, i.e., the closest data-point, and $C_2 = \sigma^2 \log N - C$, where $C$ is defined in \eqref{eq:softmin}.
\end{proposition}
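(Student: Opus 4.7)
The plan is to combine the local Lipschitz property of the error $e$ with a standard log-sum-exp comparison that bounds the hard min by the soft min up to a known additive constant.

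First, I would use the Lipschitz property of $e$ on $\mathcal{Z}$ directly. Since $x, x_c \in \mathcal{Z}$ (noting $x_c$ is a data point and we're implicitly assuming the data and query lie in the neighborhood where $L_e$ is valid), we have
\begin{equation}
    e(x) \leq e(x_c) + L_e \|x - x_c\|_2.
\end{equation}
This isolates the task of controlling $\|x - x_c\|_2$ by the smoothed distance $d_\sigma(x;\mathcal{D})$.

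Next, the key step is to relate the true squared distance $\tfrac{1}{2}\|x - x_c\|_2^2 = \min_i \tfrac{1}{2}\|x - x_i\|_2^2$ to the softmin used in the definition of $d_\sigma$. Writing $a_i \coloneqq \tfrac{1}{2}\|x - x_i\|_2^2$, the standard log-sum-exp bound
\begin{equation}
    \min_i a_i - \sigma^2 \log N \;\leq\; -\sigma^2 \log \sum_i \exp(-a_i/\sigma^2) \;\leq\; \min_i a_i
\end{equation}
yields, after adding $C$ to all sides and using the definition \eqref{eq:softmin},
\begin{equation}
    \tfrac{1}{2}\|x - x_c\|_2^2 \;\leq\; d_\sigma(x;\mathcal{D})^2 + \sigma^2 \log N - C \;=\; d_\sigma(x;\mathcal{D})^2 + C_2.
\end{equation}
Taking square roots gives $\|x - x_c\|_2 \leq \sqrt{2}\sqrt{d_\sigma(x;\mathcal{D})^2 + C_2}$, and substituting into the Lipschitz bound yields the claim.

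There is no real obstacle here: the only subtlety is making sure the Lipschitz bound is applied on a path from $x$ to $x_c$ that stays inside $\mathcal{Z}$, which should be handled by the local validity assumption on $L_e$ (likely a convexity or star-shaped assumption on $\mathcal{Z}$ relative to the data). The rest is arithmetic with the log-sum-exp inequality. I would state the log-sum-exp bound as a short lemma or inline observation, since it is the single nontrivial analytic ingredient in the argument.
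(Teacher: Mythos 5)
Your proposal is correct and follows essentially the same route as the paper's proof: apply the Lipschitz bound of $e$ from $x$ to the nearest data point $x_c$, then control $\frac{1}{2}\Vert x - x_c\Vert_2^2 = \min_i \frac{1}{2}\Vert x - x_i\Vert_2^2$ via the standard $\min \le \mathsf{Softmin} + \sigma^2\log N$ direction of the log-sum-exp inequality, which is exactly the paper's sixth line. The only cosmetic difference is that the paper first writes $e(x) \le \min_{x_i}[e(x_i) + L_e\Vert x - x_i\Vert_2]$ and then relaxes to the particular choice $x_c$, whereas you go to $x_c$ directly; your remark about the validity of the Lipschitz bound along the segment from $x$ to $x_c$ is a reasonable caveat the paper leaves implicit.
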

\vspace{-5pt}
In general, it is difficult to obtain $L_e$ in the absence of more structured knowledge of $f$. However, it is possible to obtain confidence bounds on $L_e$ using statistical estimation with pairwise finite slopes $\|e(x_i) - e(x_j)\|/\|x_i-x_j\|$ within the dataset \cite[Ch. 3]{Coles2001} \cite{glen} . We believe this offers benefits over ensembles as characterizing the convergence of neural network weights with randomly initialized points is far more complex to analyze. We compare distance to data to other uncertainty metrics in \Cref{fig:uncertainty_comparison} in simple 1D offline model-based optimization, where we show that ensembles \cite{mopo, pddm} have local minima outside of data, and unpredictably underestimates uncertainty due to model bias.

\begin{figure}[b]\vspace{-14pt}
	\centering\includegraphics[width = 0.9\textwidth]{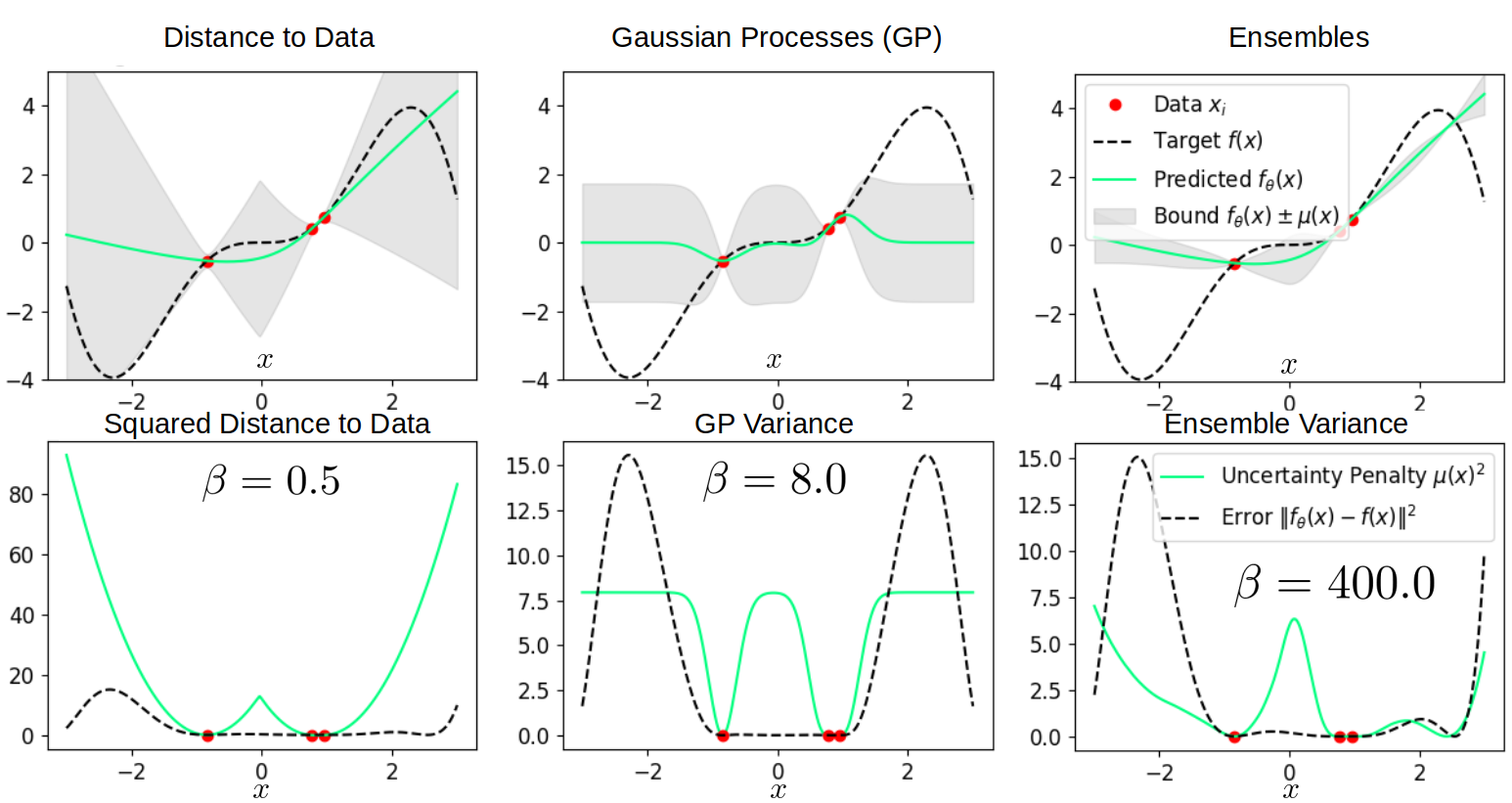}
    \vskip -0.1 true in
    \caption{Comparison of different uncertainty metrics. Top row: Visualization of distance to data against GPs and ensembles with $M=2$. Bottom row: Visualization of the penalty $\mu(x)^2$. All the metrics underestimate uncertainty to varying degrees, but distance to data can be more amenable for analysis; as we increase samples, distance to data will be able to bound the true uncertainty more closely by estimating Lipschitz constants using pairwise slopes in the dataset. In addition, distance to data shows more stability to data while ensembles have local minima outside of data. }
	\label{fig:uncertainty_comparison}
    \vskip -0.02 true in
\end{figure}
\vskip -0.1 true in
\subsection{Estimating Gradients of Distance to Data with Score Matching}\label{sec:scorematching}
\vspace{-2pt}
Although we have shown benefits of smoothed distance to data as an uncertainty metric amenable for gradient-based optimization, it is costly to compute at inference time as we need to iterate through the entire dataset. At the cost of losing guarantees of exact computation in \Cref{sec:properties}, we consider ways to amortize this computation with function approximation. We first show the equivalence of the smoothed distance-to-data to the negative log likelihood of the \textit{perturbed empirical distribution} \cite{songscorematching}, which applies randomized smoothing \cite{duchi2012randomized,suh2022bundled} to the empirical distribution $\hat{p}(x;\mathcal{D})$. 
\begin{definition}[Perturbed Empirical Distribution]\normalfont
    Consider a dataset $\mathcal{D}=\{x_i\}$ and its corresponding empirical distribution $\empdist$. We define $\pertempdist$ as the noise-perturbed empirical distribution, 
    \begin{equation}\label{eq:perturbed}
        \begin{aligned}
        \pertempdistp & \coloneqq \textstyle\int \empdist\mathcal{N}(x';x,\sigma^2\mathbf{I})dx = \textstyle\frac{1}{N}\textstyle\sum_{x_i\in\mathcal{D}} \mathcal{N}(x';x_i,\sigma^2\mathbf{I}).
        \end{aligned}
    \end{equation}
\end{definition}

\begin{proposition}\normalfont\label{prop:distance}
    The negative log-likelihood of the perturbed empirical distribution $p_\sigma(x)$ is equivalent to smoothed distance to data by a factor of $\sigma^2$, up to some constant that does not depend on $x$,
    \begin{equation}
        -\sigma^2\log \pertempdist =  \sdistsigma^2 + C(N,n,\sigma).
    \end{equation}
\end{proposition}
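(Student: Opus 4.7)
The plan is to carry out a direct calculation: substitute the Gaussian density into the definition of $\pertempdist$, take the negative log (scaled by $\sigma^2$), and match the resulting $\mathsf{LogSumExp}$ term against the $\mathsf{Softmin}_\sigma$ appearing in \eqref{eq:softmin}. Everything else in the expression will be $x$-independent and thus absorbed into $C(N,n,\sigma)$.

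Concretely, the first step is to write out the explicit form
\begin{equation*}
\pertempdist = \frac{1}{N(2\pi\sigma^2)^{n/2}}\sum_{x_i \in \mathcal{D}} \exp\!\left[-\tfrac{1}{2\sigma^2}\|x-x_i\|^2\right],
\end{equation*}
using the definition in \eqref{eq:perturbed}. Taking $-\sigma^2 \log(\cdot)$ of both sides yields
\begin{equation*}
-\sigma^2 \log \pertempdist = \sigma^2 \log N + \tfrac{\sigma^2 n}{2}\log(2\pi\sigma^2) - \sigma^2 \log\!\left[\sum_i \exp\!\left[-\tfrac{1}{2\sigma^2}\|x-x_i\|^2\right]\right].
\end{equation*}

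The second step is to recognize the last term: by the definition of $\sdistsigma^2$ in \eqref{eq:softmin}, we have $-\sigma^2 \log \sum_i \exp[-\tfrac{1}{2\sigma^2}\|x-x_i\|^2] = \sdistsigma^2 - C$. Substituting and collecting all $x$-independent terms gives
\begin{equation*}
-\sigma^2 \log \pertempdist = \sdistsigma^2 + \underbrace{\sigma^2 \log N + \tfrac{\sigma^2 n}{2}\log(2\pi\sigma^2) - C}_{=: \, C(N,n,\sigma)},
\end{equation*}
which is exactly the claimed identity.

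There is no real obstacle here; the proposition is essentially a bookkeeping exercise that makes precise the well-known correspondence between log-sum-exp and softmin. The only judgment call is what to bundle into the constant, and I would be explicit that $C(N,n,\sigma)$ contains the Gaussian normalization $\tfrac{\sigma^2 n}{2}\log(2\pi\sigma^2)$, the $\sigma^2 \log N$ from averaging across $N$ samples, and the offset $-C$ chosen in \eqref{eq:softmin} to ensure nonnegativity of $\sdistsigma^2$.
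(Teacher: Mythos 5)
Your proposal is correct and follows essentially the same route as the paper's proof: expand the mixture-of-Gaussians form of $\pertempdist$, take $-\sigma^2\log(\cdot)$, identify the $\mathsf{LogSumExp}$ term with $\mathsf{Softmin}_\sigma$, and absorb the normalization $\tfrac{\sigma^2 n}{2}\log(2\pi\sigma^2)$, the $\sigma^2\log N$ term, and the offset $C$ into the constant. If anything, you are slightly more careful than the paper in explicitly tracking the $-C$ offset from \eqref{eq:softmin} inside $C(N,n,\sigma)$.
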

\vskip -0.1 true in
This connection to the perturbed empirical distribution allows us to use generative modeling tools that randomly perturb data \cite{denoisingautoencoder,songscorematching, diffusion}, such as denoising autoencoders. However, computing likelihoods directly have shown to be difficult for high-dimensional data \cite{songscorematching}. Additionally, even if such a likelihood-based generative model can estimate likelihoods with low error, there are no guarantees that its gradients will also be of low error (\Cref{app:gradienterr}), which defeats our purpose of finding a metric amenable to gradient-based optimization. Thus, we propose to estimate the gradients of the perturbed empirical distribution (score function) directly \cite{songscorematching, diffusion}, which have shown promising performance in generative modeling \cite{stablediffusion} as it bypasses the estimation of the partition function. 

Estimating the score function is a process known as \textit{score-matching}. Following \cite{songscorematching}, we introduce \textit{noise-conditioned score function} $s(x,\sigma)\coloneqq \nabla_x \log \pertempdist$ \cite{songscorematching} and aim to optimize the following objective given some decreasing sequence of annealed smoothing parameters $\sigma_k$,
\begin{equation}
    \min_\theta \textstyle \sum_k \sigma^2_k \mathbb{E}_{x\sim p_{\sigma_k}(x;\mathcal{D}}) \left[\| s_\theta(x,\sigma_k) - \nabla_x \log p_{\sigma_k}(x;\mathcal{D})\|^2\right],
\end{equation}
which has been shown to be equivalent to the denoising-score-matching loss \cite{denoisingautoencoder}. Compared to explicitly computing $\nabla_x \log p_{\sigma_k}(x;\mathcal{D})$ which would require iterating through the entire dataset, the denoising loss allows us to learn the score function using batches of data.
\begin{equation}
    \min_\theta \textstyle\sum_k \sigma^2_k \textstyle\mathbb{E}_{\substack{x\sim\empdist\\ x'\sim\mathcal{N}(x';x,\sigma^2_k\mathbf{I})}}\left[\|s_\theta(x';\sigma_k) + \sigma^{-2}_k(x'-x)\|^2\right].
\end{equation}

\vskip -0.2 true in
\section{Planning with Gradients of Data Likelihood}
\vskip -0.1 true in
Having illustrated benefits of using smoothed distance to data as an uncertainty metric, we now turn to the sequential decision-making setting of offline MBRL, where we use notation from \Cref{sec:prelimmbrl}. \vskip -0.2 true in

\paragraph{Offline MBRL with Data Likelihood.}
Consider a learned dynamics model $f_\theta$ from the dataset $\mathcal{D}=\{(x_t,u_t,x_{t+1})_i\}$, as well as the perturbed empirical distribution $p_\sigma (x_t,u_t;\mathcal{D})$ of the $(x_t,u_t)$ pairs in the dataset $\mathcal{D}$. Then, given some sequence of rewards $r_t$, we consider the following planning problem of maximizing both the reward and the likelihood of data,
\begin{equation}\label{eq:mbdp}
    \begin{aligned}
    \max_{x_{1:T},u_{1:T}} \quad & \sum^T_{t=1} r_t(x_t,u_t) + \beta \sigma^2 \sum^T_{t=1} \log p_\sigma(x_t, u_t; \mathcal{D}) \\
    \text{s.t.} \quad & x_{t+1} = f_\theta (x_t, u_t) \quad \forall t \in[1,T].
    \end{aligned}
\end{equation}
\vskip -0.05 true in

Note that $-\sigma^2 \log p_\sigma(x_t,u_t)$ is equivalent to smoothed distance to data $d_\sigma(x_t,u_t;\mathcal{D})^2$ as an objective, and acts as an uncertainty penalty, preventing the optimizer from exploiting o.o.d. solutions.

\textbf{Score-Guided Planning (SGP).} Given a differentiable $r_t$, we propose a single-shooting algorithm that rolls out $f_\theta$ and computes gradients of \Cref{eq:mbdp} with respect to the input trajectory. We first denote the sensitivity of the state and input at time $t$ $(x_t,u_t)$ with respect to the input at time $j$ as $\partial x_t / \partial u_j$ and $\partial u_t /\partial u_j$. Then, the gradients of \Cref{eq:mbdp} can be expressed using this sensitivity,
\vspace{-1pt}
\begin{equation}\label{eq:gradientcomputation}{\small
    \begin{aligned}
     & \frac{\partial}{\partial u_j}\left[\sum^T_{t=1} r_t(x_t,u_t) + \beta \sigma^2 \sum^T_{t=1} \log p_\sigma(x_t, u_t; \mathcal{D})\right] \\
    = & \sum^T_{t=1}  \left[\frac{\partial r_t}{\partial x_t} \frac{\partial x_t}{\partial u_j} + \frac{\partial r_t}{\partial u_t}\frac{\partial u_t}{\partial u_j} +  \beta\sigma^2\left[ \frac{\partial\log p_\sigma(x_t,u_t)}{\partial x_t} \frac{\partial x_t}{\partial u_j} + \frac{\partial \log p_\sigma(x_t,u_t)}{\partial u_t} \frac{\partial u_t}{\partial u_j} \right]\right].
    \end{aligned}}
\end{equation}
\vspace{-1pt}
Note that $\partial \log p_\sigma(x_t,u_t)/\partial x_t$ and $\partial \log p_\sigma(x_t,u_t)/\partial u_t$ can be obtained using the noise-conditioned score estimator $s_\theta(x,u,\sigma)$ in \Cref{sec:scorematching}, where we extend the domain to include both $x$ and $u$. After using reverse-mode autodiff (\Cref{app:gradients}) to compute the gradient, we call optimizers that accept gradient oracles, such as Adam \cite{adam}. We also note that this gradient computation can be extended to feedback policy gradients in \Cref{app:policysearch}. Finally, we train a noise-conditioned score function for some sequence $\sigma_k$, and anneal the noise-level during optimization following \cite{songscorematching}. SGP is also described with an algorithm block in \Cref{app:algorithmdescription}.

\textbf{Related Methods}. LDM, COMBO, and DOGE \cite{ldm, combo, li2023when} are closely-related offline methods that penalize for data likelihood or distance to data. Many IL methods \cite{chi2023diffusion} are also related, as SGP with zero rewards can be used to imitate demonstration data by maximizing data likelihood \cite{divergence} (\Cref{app:imitationlearning}). In particular, AIRL \cite{fu2018learning} maximizes a rewardless version of our objective using GANs\cite{gan}. However, these methods primarily rely on likelihood-based generative models, and do not consider the interplay of generative modeling with gradient-based optimization. Diffuser \cite{diffusionplanning, ajay2022conditional} shares similar methodologies with SGP, and solves a variant of \Cref{eq:mbdp} with a quadratic-penalty-based approximation of direct transcription (\Cref{app:diffuser}), which comes with benefits of numerical stability for long horizons \cite[exercise~10.1]{underactuated}, and robustness to sparse rewards \cite{latcol}.


\section{Empirical Results}
\vskip -0.1 true in
We now test our proposed algorithm (SGP) empirically and show that it is an effective method for offline optimization that has scalable properties in high-dimensional problems by leveraging gradient-based optimization. All details for the included environments are presented in \Cref{app:experiments}.
\begin{figure}[b]
    \vskip -0.2 true in
	\centering\includegraphics[width = 0.9\textwidth]{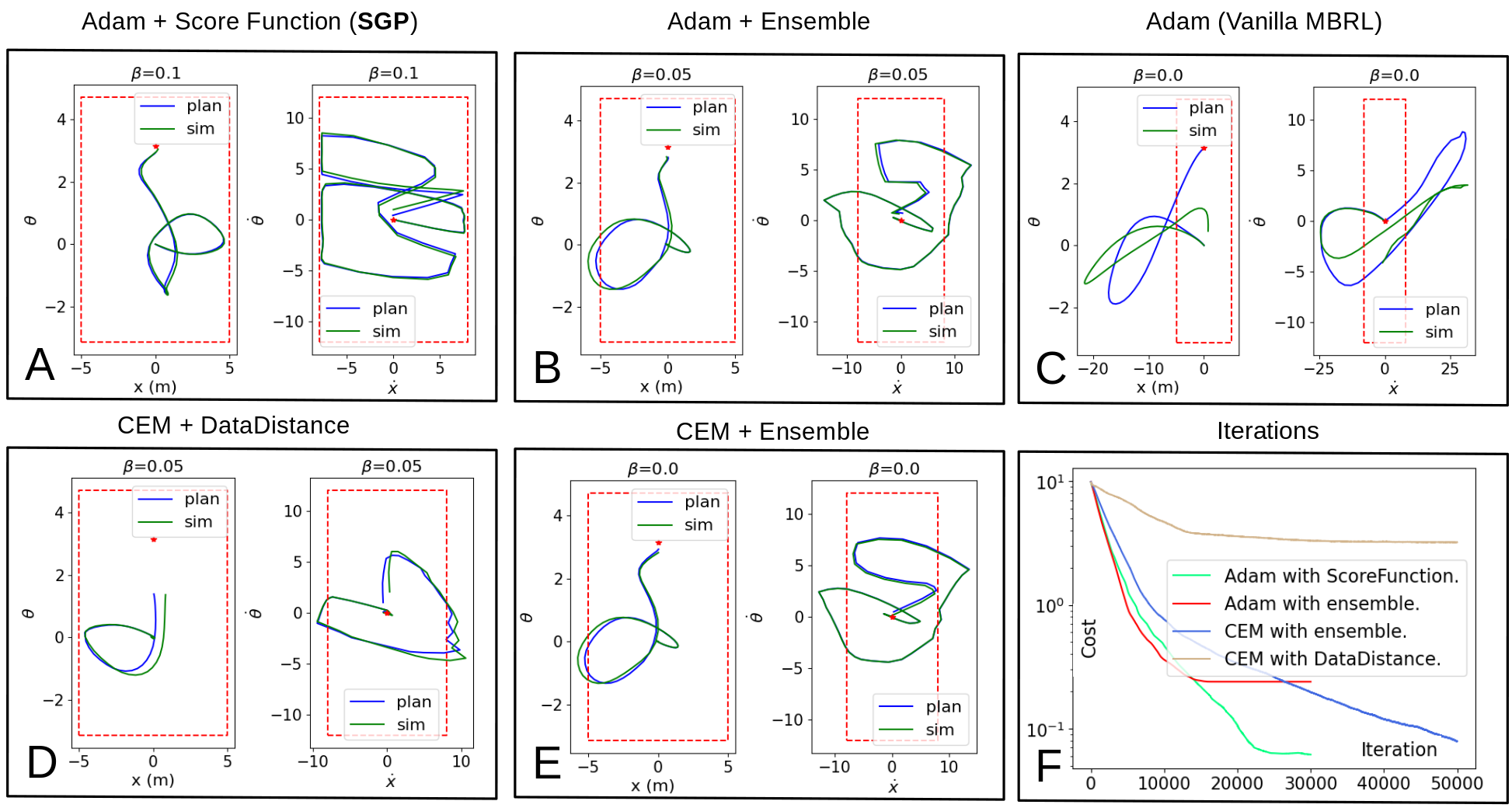}
 \vskip -0.1 true in
    \caption{Experiments for the cart-pole system, where blue is the hallucinated plan and green is the actual rollout, and the red box is the regime of collected data. Note that SGP (\textbf{A}) results in a successful swingup while MBRL (\textbf{C}) fails. Compared to other baselines such as \textbf{B},\textbf{D},\textbf{E}, our method achieves much lower true cost with faster convergence (\textbf{F}).}
    \label{fig:cartpole}
\end{figure}


\vspace{-10pt}
\paragraph{Cart-pole System with Learned Dynamics.} 
We apply our method to swing up a cart-pole model, which undergoes a long duration ($T=60$) dynamic motion, which directly translates to high number of decision variables in single shooting. We first show the effects of model bias in in Fig. \ref{fig:cartpole}.C, where vanilla MBRL plans a trajectory that leaves the region (red box) of training data. In contrast, SGP keeps the trajectory within the training distribution (Fig.\ref{fig:cartpole}.A). We further compare our method against the baselines of ensembles (Fig.\ref{fig:cartpole}.B,E) and CEM \cite{cem} (Fig.\ref{fig:cartpole}.D,E). To fairly compare CEM against SGP, we additionally train a data distance estimator that predicts the likelihoods directly based on explicit computation of the softmin distance in \Cref{eq:softmin} (\Cref{app:experiments}). 

We observe that the convergence of CEM is much slower than first-order methods (Fig.\ref{fig:cartpole}.F). However, we surprisingly obtain more asymptotic performance with CEM rather than using Adam for ensembles - we believe this signifies presence of local minima in gradient-based minimization of ensemble variance, as opposed to CEM which has some stochastic smoothing that allows it to escape local minima \cite{suh2022differentiable}. Finally, we note that unlike score functions, distance to data is considerably more difficult to train as we need to loop through the entire dataset to compute one sample. As a result, it is costly to train and we were unable to train it to good performance in a reasonable amount of time.

\vspace{-5pt}
\paragraph{D4RL Benchmark.}
To evaluate our method against other methods on a standard benchmark, we use the MuJoCo \cite{mujoco} tasks in the D4RL \cite{d4rl} dataset with three different environments and sources of data. To turn our planner into a controller, we solve the planning problem with some finite horizon, rollout the first optimal action $u^*_1$, and recompute the plan in a standard model-predictive control (MPC) \cite{mpc} fashion. We compare against methods such as Behavior Cloning (BC) \cite{bc, dagger}, MOPO \cite{mopo}, MBOP \cite{mbop}, CQL \cite{cql}, AWAC \cite{nair2021awac}, and Diffuser \cite{diffusionplanning}. Our results demonstrate that our algorithm performs comparably to other state-of-the-art methods. On many of the tasks, we demonstrate better performance compared to MOPO \cite{mopo} which uses ensembles for uncertainty estimation, while requiring less memory. This empirically supports our proposed benefits of using score matching for gradient-based offline MBRL. In addition, we outperform BC in many tasks, illustrating that we achieve better performance than pure imitation learning by incorporating rewards. 

\begin{table}[t]
\footnotesize
\begin{tabular}{llrrrrrrr}\hline
\textbf{Dataset} & \textbf{Env}  & \textbf{BC} & \textbf{MOPO} & \textbf{MBOP} & \textbf{AWAC} & \textbf{CQL} & \textbf{Diffuser} & \textbf{Ours}         \\\hline
Random & halfcheetah &  2.1 & \textbf{35.4}\scriptsize$\pm$2.5 & 6.3\scriptsize$\pm$ 4.0 &  2.2 & \textbf{35.4} & - & 31.3\scriptsize$\pm$0.5 \\
Random & hopper  & 1.6 & \textbf{11.7}\scriptsize$\pm$0.4 & 10.8\scriptsize$\pm$0.3 &  9.6 & 10.8 & - & \textbf{12.3}\scriptsize$\pm$1.3\\
Random & walker2d &  9.8 & 13.6\scriptsize$\pm$2.6 & 8.1\scriptsize$\pm$5.5 & 5.1 & 7.0 & - &  \textbf{15.6}\scriptsize$\pm$7.1 \\\hline
Medium      & halfcheetah   & 42.6 & 42.3\scriptsize$\pm$1.6 & 44.6\scriptsize$\pm$0.8 &  37.4 & 44.0 & 42.8\scriptsize$\pm$0.3 & \textbf{46.6}\scriptsize$\pm$2.8\\          
Medium      & hopper   & 52.9 & 28.0\scriptsize$\pm$12.4 & 48.8\scriptsize$\pm$26.8 & \textbf{72.0} & 58.5 & \textbf{74.3}\scriptsize$\pm$1.4 & 60.6\scriptsize$\pm$5.8\\
Medium      & walker2d & 75.3 & 17.8\scriptsize$\pm$19.3 & 41.0\scriptsize$\pm$29.4 &  30.1 & 72.5 &\textbf{79.6}\scriptsize$\pm$0.6 & 36.2\scriptsize$\pm$6.8\\\hline
Med-Exp    & halfcheetah  & 55.2 & 63.3\scriptsize$\pm$38.0 & \textbf{105.9}\scriptsize$\pm$17.8 & 36.8 &  91.6 & 88.9\scriptsize$\pm$0.3 & 61.6\scriptsize$\pm$2.4\\
Med-Exp   & hopper   & 52.5 & 23.7\scriptsize$\pm$6.0 & 55.1\scriptsize$\pm$44.3 &  80.9 & \textbf{105.4} & 103.3\scriptsize$\pm$1.3 & \textbf{109.2}\scriptsize$\pm$0.3 \\
Med-Exp   & walker2d &  \textbf{107.5} & 44.6\scriptsize$\pm$12.9 & 70.2\scriptsize$\pm$36.2  & 42.7 & \textbf{108.8} & \textbf{106.9}\scriptsize$\pm$0.2 & 103.0\scriptsize$\pm$2.8\\\hline
\end{tabular}
\caption{Comparing performance of our algorithm on the D4RL \cite{d4rl} dataset. Similar to \cite{diffusionplanning,iql}, we bold-face 95\% of max performance. Ours is averaged from 5 trials. We emphasize that we outperform ensembles (MOPO) in many of the tasks. \vspace{-25pt}}
\end{table}

\begin{figure}[b]\vspace{-15pt}
	\centering\includegraphics[width = 1.0\textwidth]{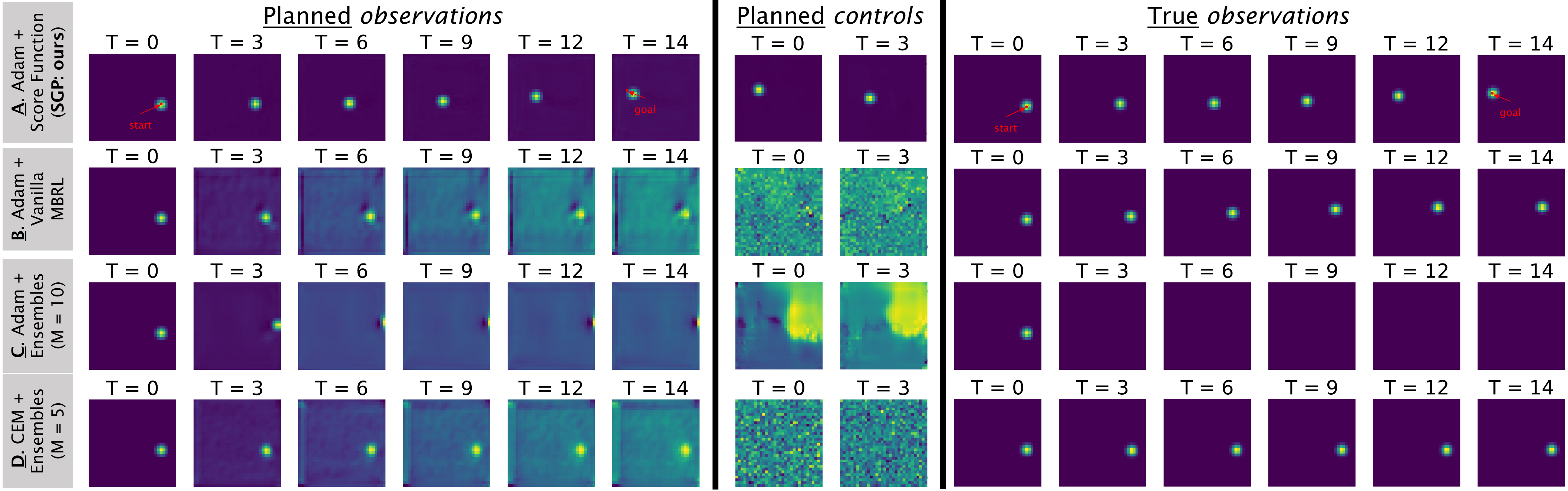}\vspace{-18pt}
    \caption{Planned (left) and true (right) rollouts of observations for the pixel single-integrator with pixel action space (center). Note that SGP (\textbf{A}) significantly outperforms Vanilla MBRL (\textbf{B}), Adam + Ensembles (\textbf{C}), and CEM + Ensembles (\textbf{D}), which are unable to combat model bias during rollout.}
	\label{fig:pixelspace}
\end{figure}

\vspace{-8pt}
\paragraph{Pixel-Based Single Integrator.}
Pixel spaces have long been a challenge for MBRL due to the challenges of reliably controlling from images \cite{DBLP:journals/corr/abs-2304-12405}, learning pixel-space dynamics, and combating the resulting model-bias \cite{carrots,DBLP:conf/wafr/ChouOB22,dvf,dreamer}. To show scalability of our method, we present a $32 \times 32$ pixel-based single integrator environment where the action space is defined in pixel space, similar to a spatial action map \cite{transporter}; see Fig. \ref{fig:pixelspace} for visuals. Our planning problem requires proposing a sequence of $32 \times 32$ control images, which minimizes cost (goal-reaching while minimizing running cost) when rolled out through the dynamics (see Appendix \ref{app:experiments} for details and numerical results). We demonstrate that SGP capably mitigates model bias, generating plans consisting of plausible observation/controls near the data; thus, these plans are closely followed when executed open-loop under the true dynamics (Fig.\ref{fig:pixelspace}.A). When ignoring the effect of model bias and planning na\"ively with MBRL (Fig.\ref{fig:pixelspace}.B), i.e., setting $\beta = 0$, the resulting trajectory exploits the subtleties of the chosen cost function, planning an unrealistic trajectory which leads to high cost at runtime. Also, we show that ensembles are unable to stably converge back to the data manifold of images seen during training, instead getting trapped in local minima, planning unrealistic trajectories which translate to poor runtime rollouts on the true system (Fig.\ref{fig:pixelspace}.C). Moreover, due to the high dimensionality of this action space ($u_t \in \mathbb{R}^{32^2}$), we see that zeroth-order methods, such as CEM \cite{cem}, lead to very low convergence rates (Fig.\ref{fig:pixelspace}.D) due to the large number of decision variables \cite{ghadimilan, suh2022differentiable}; again, this leads to high cost seen at runtime.

\vspace{-8pt}
\paragraph{Box-Pushing with Learned Marker Dynamics.}
To validate our method on the data-scarce regime of hardware, we prepare a box-pushing task from \cite{manuelli2020keypoints} where we leverage the quasistatic assumption \cite{quasistatic} and treat the positions of the motion capture markers as if they are states. An interesting feature of this setup is that the markers implicitly live on a constraint manifold where the distance between each marker is fixed; we ask if using score matching can stabilize the rollouts of prediction to obey this implicit constraint, similar to how diffusion stabilizes back to the data manifold \cite{frank}. To test this, we collect about $100$ demonstrations of the box being pushed to different positions, which lead to $750$ samples of data tuples $(x_t,u_t,x_{t+1})$. We aim to show that i) SGP can enforce implicit constraints within the data, ii) distance to data acts as a successful uncertainty metric in data-scarce offline MBRL, and iii) we can use the reward to change the task from imitation data.

\begin{figure}[t]
	\centering\includegraphics[width = 1.0\textwidth]{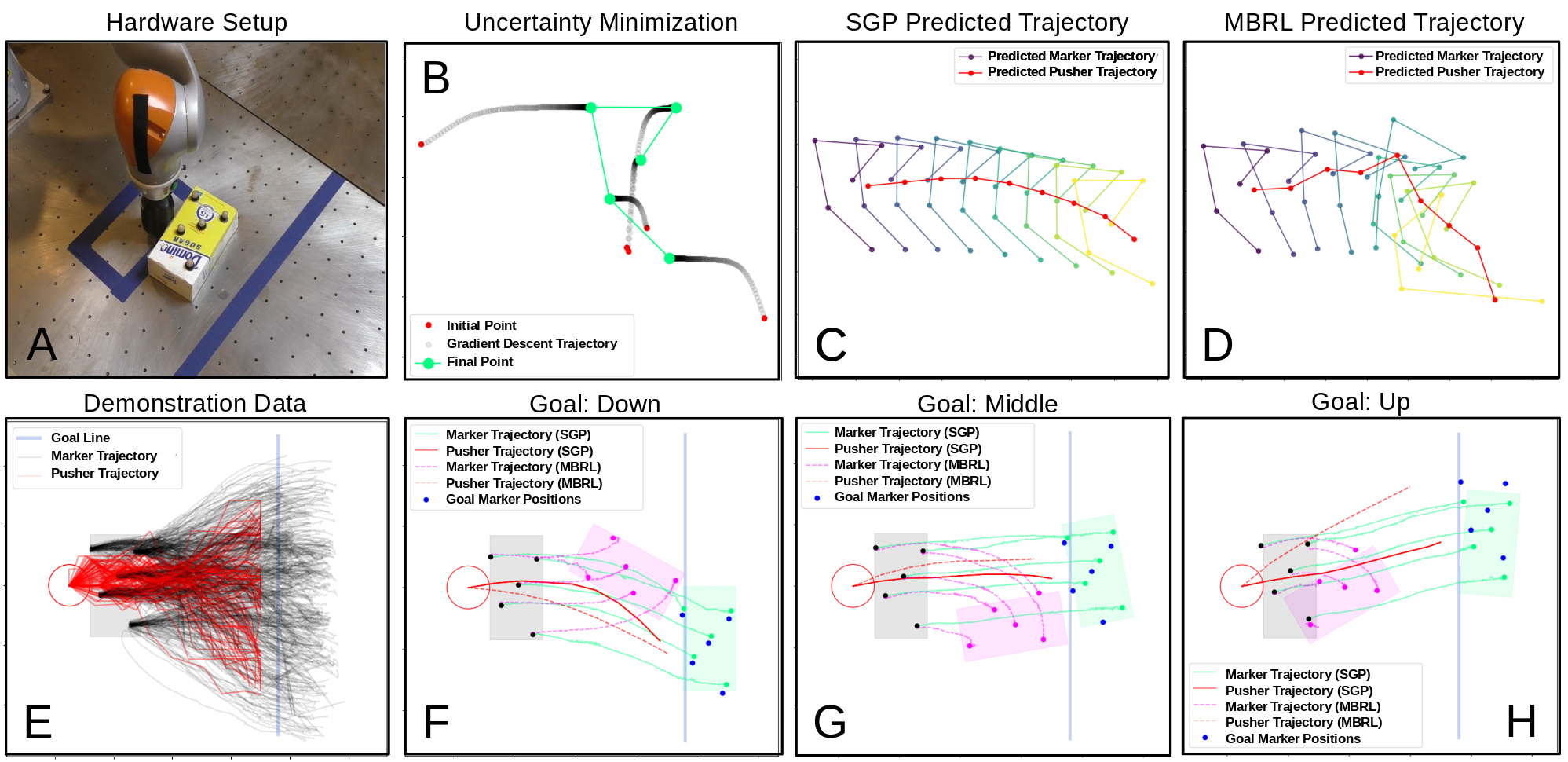}\vspace{-18pt}
    \caption{Visualization of the box-pushing experiment in \textbf{A}, where data was collected from 100 demonstrations illustrated in \textbf{E}. We show that SGP successfully stabilizes markers back to their implicit constraints (\textbf{B}), which allows better predictions (\textbf{C}) than vanilla MBRL (\textbf{D}). We additionally use the reward to change the goal into three different regions of down (\textbf{F}), middle (\textbf{G}), and up (\textbf{H}), where vanilla MBRL fails, and behavior cloning is not capable of.}
	\label{fig:hardware}
    \vskip -0.25 true in
\end{figure}

Our results in Fig.\ref{fig:hardware} demonstrate that SGP successfully imposes implicit constraints on the data. Not only does minimization of uncertainty in the absence of rewards result in stabilization to the marker position constraints (Fig.\ref{fig:hardware}.B), but the rollouts also become considerably more stable when we use the distance to data penalty (Fig.\ref{fig:hardware}.C). In contrast, MBRL with $\beta=0$ destroys the keypoint structure as dynamics are rolled out, resulting in suboptimal performance (Fig.\ref{fig:hardware}.D). Finally, we demonstrate through Fig.\ref{fig:hardware}.F,G,H that we can use rewards to show goal-driven behavior to various goals from a single set of demonstration data, which behavior cloning is not capable of. 

\section{Conclusion and Discussion of Limitations}

We proposed SGP, which is a \textit{first-order} offline MBRL planning algorithm that learns gradients of distance to data with score-matching techniques, and solves planning problems that jointly maximize reward and data likelihood. Through empirical experiments, we showed that SGP beats baselines of zeroth-order methods and ensembles, has comparable performance with state-of-art offline RL algorithms, and scales to pixel-space action spaces with up to $15,360$ decision variables for planning.

We conclude with listing some limitations of our approach. Unlike ensembles, our method by construction discourages extrapolation, which can be a limitation when the networks jointly recover meaningful inductive bias. We also believe that computation is a current bottleneck for realtime application of our MPC, which can take between 1-2 seconds per iteration due to gradient computations. Finally, we have not investigated the performance of SGP under aleatoric uncertainty \cite{razto}.

\clearpage
\acknowledgments{This work was funded by Amazon.com Services LLC; Award No. PO \#2D-06310236, and Office of Naval Research (ONR); Award No. N00014-22-1-2121. We would like to thank Pulkit Agrawal and Max Simchowitz for helpful discussions on the paper.}


\bibliography{references}  
\newpage
\appendix 
\section{Proofs}\label{app:proofs}

\subsection{Proof for \Cref{prop:data_stability}}
For each $k$, we denote the stationary point of gradient descent as $x_k^* = \lim_{n \rightarrow \infty} x_k^n$. Using the first-order optimality condition at the stationary point, we know that 
\begin{equation}
    \begin{aligned}
        0 & = \nabla_x d_{\sigma_k}(x;\mathcal{D})^2\big|_{x=x_k^*} \\
        & = -\nabla_x\sigma^2_k\log \left[\sum_i\exp \left[-\textstyle\frac{1}{2\sigma^{2}_k}\|x - x_i\|^2\right]\right]\Bigg|_{x=x_k^*} \\
        & = \frac{\sigma^2_k \sum_i \exp \left[-\textstyle\frac{1}{2\sigma^{2}_k}\|x - x_i\|^2\right]\sigma^{-2}_k(x-x_i)}{\sum_i \exp \left[-\textstyle\frac{1}{2\sigma^{2}_k}\|x - x_i\|^2\right]}\Bigg|_{x=x_k^*}.
    \end{aligned}
\end{equation}
We then have that
\begin{equation}\label{eq:stationary_point}
    \begin{aligned}
        \sum_i \exp \left[-\textstyle\frac{1}{2\sigma^{2}_k}\|x_k^* - x_i\|^2\right] (x_k^*-x_i) &= 0\\
        (\sum_i \exp \left[-\textstyle\frac{1}{2\sigma^{2}_k}\|x_k^* - x_i\|^2\right]) x_k^* &= \sum_i \exp \left[-\textstyle\frac{1}{2\sigma^{2}_k}\|x_k^* - x_i\|^2\right] x_i\\
        x_k^* &= \sum_i \frac{\exp \left[-\textstyle\frac{1}{2\sigma^{2}_k}\|x_k^* - x_i\|^2\right]}{\sum_n \exp \left[-\textstyle\frac{1}{2\sigma^{2}_k}\|x_k^* - x_n\|^2\right]} x_i.
    \end{aligned}
\end{equation}
Let $x_j \in \argmin_{x_i\in\mathcal{D}} \|x_k^* - x_i\|^2$ and $ \Delta_i(\sigma_k) = \sigma^{-2}_k(\|x_k^* - x_i\|^2 - \|x_k^* - x_j\|^2)$. 
\begin{lemma}
When there is a unique minimizer $x_j$, $\lim_{k \rightarrow \infty} x_k^* = x_j$.
\end{lemma}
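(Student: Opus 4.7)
The plan is to exploit the fixed-point equation \eqref{eq:stationary_point}, which expresses $x_k^*$ as a softmax-weighted convex combination of the data points, and to show that as $\sigma_k \downarrow 0$ the weight concentrates on the unique closest point $x_j$. First I would observe that $\{x_k^*\}$ is bounded: by \eqref{eq:stationary_point} it lies in the convex hull of $\mathcal{D}$ for every $k$.

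Next, I would rewrite the fixed-point identity in terms of the displacement $y_k := x_k^* - x_j$ and the non-negative scaled gaps $\Delta_i(\sigma_k) = \sigma_k^{-2}(\|x_k^* - x_i\|^2 - \|x_k^* - x_j\|^2)$, yielding
\[
y_k \;=\; \sum_{i \neq j} \frac{\exp(-\tfrac12 \Delta_i(\sigma_k))}{\sum_n \exp(-\tfrac12 \Delta_n(\sigma_k))}\,(x_i - x_j).
\]
Since $\Delta_j(\sigma_k) = 0$, the denominator is at least $1$, which gives the clean upper bound
\[
\|y_k\| \;\leq\; \sum_{i \neq j} \exp\!\left(-\tfrac12 \Delta_i(\sigma_k)\right)\|x_i - x_j\|.
\]
It therefore suffices to show that $\Delta_i(\sigma_k) \to +\infty$ for every $i \neq j$.

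I would establish the divergence of the gaps by a subsequential-compactness / contradiction argument. Suppose $x_k^* \not\to x_j$; by boundedness extract a subsequence $x_{k_\ell}^* \to x^\infty \neq x_j$. The uniqueness hypothesis on $x_j$, together with continuity of the squared distance, forces a uniform positive gap $\|x^\infty - x_i\|^2 - \|x^\infty - x_j\|^2 \geq \delta > 0$ for every $i \neq j$, so that $\|x_{k_\ell}^* - x_i\|^2 - \|x_{k_\ell}^* - x_j\|^2 \geq \delta/2$ for all large $\ell$. Then $\Delta_i(\sigma_{k_\ell}) \geq \delta/(2\sigma_{k_\ell}^2) \to \infty$, and substituting back into the displacement bound gives $\|y_{k_\ell}\| \to 0$, contradicting $x^\infty \neq x_j$. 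Hence every subsequence converges to $x_j$, and so does the full sequence.

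The main obstacle is the mild circularity: the bound on $\|y_k\|$ is only useful once we know the $\Delta_i(\sigma_k)$ diverge, but the $\Delta_i(\sigma_k)$ are themselves defined through $x_k^*$. The subsequential limit argument is what breaks this circle, since it turns a qualitative ``strictly positive gap at each $x_k^*$'' statement into a uniform quantitative lower bound $\delta > 0$ near the accumulation point. A subtlety I would need to handle carefully is transferring the uniqueness of the minimizer from the finite-$\sigma_k$ iterates to the limit point $x^\infty$; the hypothesis that the minimizer is unique (and remains the same $x_j$) is doing exactly this work, since otherwise $x^\infty$ could be equidistant from $x_j$ and another point, and the softmax in \eqref{eq:stationary_point} would concentrate on a convex combination rather than on $x_j$ alone.
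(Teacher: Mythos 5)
Your proposal follows the same core route as the paper's proof: both start from the stationary-point identity \eqref{eq:stationary_point} expressing $x_k^*$ as a softmax-weighted convex combination of the data, introduce the scaled gaps $\Delta_i(\sigma_k)$, and conclude by showing the weights concentrate entirely on $x_j$. Where you differ is in how the divergence $\Delta_i(\sigma_k)\to\infty$ for $i\neq j$ is justified. The paper asserts it directly from the pointwise fact $\Delta_i>0$ together with $\sigma_k\to 0$, which glosses over exactly the circularity you name: the unscaled gap $\|x_k^*-x_i\|^2-\|x_k^*-x_j\|^2$ is evaluated at the moving point $x_k^*$ and could in principle shrink as fast as $\sigma_k^2$. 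Your additions --- boundedness of $\{x_k^*\}$ via the convex hull of $\mathcal{D}$, and the subsequential-compactness contradiction that upgrades qualitative positivity to a uniform lower bound $\delta>0$ near the accumulation point --- are what make the step airtight, so your version is strictly more careful than the paper's. The one remaining soft spot, which you flag yourself, is that continuity only transfers the strict inequality at the iterates to a non-strict inequality at $x^\infty$, so the hypothesis ``unique minimizer $x_j$'' must be read as holding with a strict gap at the limit point (consistent with the paper's third lemma, which ties uniqueness to $x_k^*$ being a local minimizer). With that reading your argument is complete, and it in fact patches a genuine looseness in the paper's own proof.
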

\begin{proof}
Since $x_j$ is the unique closest data point to $x_k^*$, $\Delta_i>0$ for $i \neq j$ and $\Delta_j=0$. With the monotonically decreasing $\sigma_k \rightarrow 0$, we know that
\begin{equation}
    \begin{aligned}
    \lim_{k \rightarrow \infty}\Delta_i(\sigma_k) = 
        \begin{cases}
            0 \quad & \text{if  } i =j \\
            \infty \quad & \text{otherwise}.
        \end{cases}
    \end{aligned}
\end{equation}
Therefore, 
\begin{equation}
    \begin{aligned}
    \lim_{k \rightarrow \infty}\exp\left[-\textstyle\frac{1}{2}\Delta_i(\sigma_k)\right] = 
        \begin{cases}
            1 \quad & \text{if  } i = j \\
            0 \quad & \text{otherwise}.
        \end{cases}
    \end{aligned}
\end{equation}
We then obtain 
\begin{equation}
\begin{aligned}\label{eq:exp_limit_unique}
    \lim_{k \rightarrow \infty}\frac{\exp\left[-\textstyle\frac{1}{2\sigma^2_k}(\|x_k^* - x_i\|^2-\|x_k^* - x_j\|^2)\right]}{\sum_n \exp\left[-\textstyle\frac{1}{2\sigma^2_k}(\|x_k^* - x_n\|^2-\|x_k^* - x_j\|^2)\right]}
    &= \lim_{k \rightarrow \infty}\frac{\exp\left[-\textstyle\frac{1}{2}\Delta_i(\sigma_k)\right]}{\sum_{n \neq j} \exp\left[-\textstyle\frac{1}{2}\Delta_{n}(\sigma_k)\right] + 1}\\
        &=\begin{cases}
            1 \quad & \text{if  } i = j \\
            0 \quad & \text{otherwise}.
        \end{cases}
    \end{aligned}
\end{equation}
Therefore, by combining \eqref{eq:stationary_point} and \eqref{eq:exp_limit_unique}, we have that
\begin{align} \label{eq:data_stability}
    \lim_{k \rightarrow \infty} x_k^* = x_j.
\end{align}
\end{proof}
\begin{lemma}
When there are multiple minimizers $x_{j_1}, \dots, x_{j_m} \in \argmin_{x_i\in\mathcal{D}} \|x_k^* - x_i\|^2$, $\lim_{k \rightarrow \infty} x_k^* = \frac{1}{m} \sum_{l=1}^m x_{j_l}$.
\end{lemma}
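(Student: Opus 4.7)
The plan is to mirror the unique-minimizer lemma, starting from the stationary condition \eqref{eq:stationary_point} and taking the $\sigma_k \to 0$ limit carefully to track the symmetric balance among the $m$ co-closest data points. First I would renormalize the softmax weights in \eqref{eq:stationary_point} by multiplying both numerator and denominator by $\exp\bigl[\tfrac{1}{2\sigma_k^2}\|x_k^* - x_{j_1}\|^2\bigr]$, rewriting $x_k^*$ as a convex combination of the $\{x_i\}$ with weights $\exp[-\tfrac{1}{2}\Delta_i(\sigma_k)] / \sum_n \exp[-\tfrac{1}{2}\Delta_n(\sigma_k)]$, exactly as in the unique case but now allowing several indices to attain $\Delta_i(\sigma_k) = 0$.

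Next I would split the indices into the $m$ minimizers and the remaining non-minimizers. By hypothesis, $\Delta_{j_l}(\sigma_k) = 0$ for every $l \in \{1,\ldots,m\}$, so each such index contributes an unnormalized weight of $1$. For every other index $i$, $\|x_k^* - x_i\|^2 > \|x_k^* - x_{j_1}\|^2$ with the gap bounded away from zero, hence $\Delta_i(\sigma_k) \to \infty$ and its weight vanishes. Normalizing, the denominator tends to $m$, the non-minimizer weights go to $0$, and each minimizer weight tends to $1/m$, yielding $\lim_{k\to\infty} x_k^* = \tfrac{1}{m}\sum_{l=1}^m x_{j_l}$ as claimed.

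The main obstacle will be handling the $k$-dependence of the argmin set, since a priori different data points can be closest to $x_k^*$ for different $k$, and the lemma as stated implicitly treats $\{x_{j_1},\ldots,x_{j_m}\}$ as fixed. I would resolve this by extracting a convergent subsequence along which $x_k^* \to x^\infty$ and the argmin set stabilizes to a single collection $\{x_{j_1},\ldots,x_{j_m}\}$ of co-equidistant data points; finiteness of $\mathcal{D}$ and continuity of the squared distance at $x^\infty$ then guarantee that the strict gap between minimizer and non-minimizer distances persists for all sufficiently large $k$, which is exactly the bound needed for the exponential decay argument above. A final consistency check is that the limit point $\tfrac{1}{m}\sum_l x_{j_l}$ is itself equidistant from each $x_{j_l}$ by symmetry, so it is compatible with the assumed argmin structure, ruling out spurious subsequential limits.
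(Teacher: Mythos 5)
Your proposal matches the paper's proof: both renormalize the softmax weights in \eqref{eq:stationary_point} by the minimal exponent, observe that each of the $m$ tied indices contributes unnormalized weight $1$ while every other index's weight decays to $0$, and conclude that the weights tend to $1/m$ so that $x_k^*$ converges to the centroid $\frac{1}{m}\sum_{l=1}^m x_{j_l}$; your additional subsequence argument handling the $k$-dependence of the argmin set is extra care that the paper's proof silently omits. One minor caveat: your closing ``consistency check'' that the centroid is equidistant from the $x_{j_l}$ ``by symmetry'' is not true for an arbitrary finite point set, but that aside is not load-bearing for the argument.
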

\begin{proof}
In contrast to \eqref{eq:exp_limit_unique} for the unique $x_j$, we have that 
\begin{equation}
    \begin{aligned}
    &\lim_{k \rightarrow \infty}\frac{\exp\left[-\textstyle\frac{1}{2}(\|x_k^* - x_i\|^2-\|x_k^* - x_{j_1}\|^2)\right]}{\sum_n \exp\left[-\textstyle\frac{1}{2}(\|x_k^* - x_n\|^2-\|x_k^* - x_{j_1}\|^2)\right]}\\
    &= \lim_{k \rightarrow \infty}\frac{\exp\left[-\textstyle\frac{1}{2}\Delta_i(\sigma_k)\right]}{\sum_{n \notin \{j_1, \dots, j_m\}} \exp\left[-\textstyle\frac{1}{2}\Delta_{n}(\sigma_k)\right] + m}\\
        &=\begin{cases}
            \frac{1}{m} \quad & \text{if  } i \in \{j_1, \dots, j_m\} \\
            0 \quad & \text{otherwise}.
        \end{cases}
    \end{aligned}
\end{equation}
Therefore, $x_k^*$ converges to the geometric center of all the minimizers:
\begin{align}
    \lim_{k \rightarrow \infty} x_k^* = \frac{1}{m} \sum_{l=1}^m x_{j_l}.
\end{align}
\end{proof}
\begin{lemma}
For a local minimizer $x_k^*$, $x_j$ is unique; for a local maximizer or saddle point $x_k^*$,  $x_j$ is not unique.
\end{lemma}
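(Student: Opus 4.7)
The plan is to compute the Hessian of $d_{\sigma_k}^2$ at the stationary point $x_k^*$, identify it as $I$ minus a softmax-weighted covariance of the dataset, and then read off whether $x_k^*$ is a local minimizer, maximizer, or saddle by examining this covariance in each of the two cases already isolated above, as $\sigma_k \to 0$.

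First I would differentiate the gradient expression that produced \eqref{eq:stationary_point}. Using $\nabla_x w_i(x) = \sigma_k^{-2} w_i(x)(x_i - \bar{x}(x))$, where $w_i(x) \propto \exp[-\|x-x_i\|^2/(2\sigma_k^2)]$ and $\bar{x}(x) \coloneqq \sum_i w_i(x)\, x_i$, a direct calculation gives
\begin{equation*}
\nabla^2 d_{\sigma_k}^2(x) \;=\; I - \sigma_k^{-2}\Sigma_w(x), \qquad \Sigma_w(x) \;\coloneqq\; \sum_i w_i(x)\,(x_i - \bar{x}(x))(x_i - \bar{x}(x))^T.
\end{equation*}
At a stationary point $\bar{x}(x_k^*) = x_k^*$, so $\Sigma_w(x_k^*) \succeq 0$ is the softmax-weighted data covariance about $x_k^*$; in particular, $x_k^*$ is a strict local minimizer iff every eigenvalue of $\Sigma_w(x_k^*)$ lies strictly below $\sigma_k^2$, and it is a local maximizer or saddle whenever $\Sigma_w(x_k^*)$ has an eigenvalue at least $\sigma_k^2$.

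Next, I would specialize to the two exhaustive cases from the preceding two lemmas in the $\sigma_k \to 0$ regime. In the \emph{unique} case, set $\delta \coloneqq \min_{i \neq j}\bigl(\|x_k^* - x_i\|^2 - \|x_k^* - x_j\|^2\bigr) > 0$; then $w_j \to 1$ while $w_i = O(\exp[-\delta/(2\sigma_k^2)])$ for $i \neq j$, so
\begin{equation*}
\sigma_k^{-2}\Sigma_w(x_k^*) \;=\; O\!\left(\sigma_k^{-2}\exp[-\delta/(2\sigma_k^2)] \cdot \max_i \|x_i - x_k^*\|^2\right) \;\longrightarrow\; 0,
\end{equation*}
driving $\nabla^2 d_{\sigma_k}^2(x_k^*) \to I \succ 0$ and confirming that $x_k^*$ is a strict local minimizer for all sufficiently small $\sigma_k$. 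In the \emph{non-unique} case with $m \geq 2$ equidistant nearest data points at distance $r > 0$, one instead has $w_{j_l} \to 1/m$ for each $l$ and $w_i \to 0$ otherwise, so $\mathrm{tr}(\Sigma_w(x_k^*)) \to r^2$ and hence $\sigma_k^{-2}\mathrm{tr}(\Sigma_w(x_k^*)) \to \infty$. Some eigenvalue of $\sigma_k^{-2}\Sigma_w(x_k^*)$ therefore exceeds $1$, so the Hessian acquires a negative eigenvalue and $x_k^*$ cannot be a local minimizer.

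Both directions of the lemma then follow by contraposition, since uniqueness and non-uniqueness of $x_j$ are mutually exclusive and exhaustive. The main delicacy I expect is keeping uniform control over the exponentially decaying weights against the $\sigma_k^{-2}$ prefactor in the unique case; this is handled by the strict positivity of the gap $\delta$, which dominates any polynomial in $\sigma_k^{-1}$ in the $\sigma_k \to 0$ limit already adopted throughout the proof of \Cref{prop:data_stability}.
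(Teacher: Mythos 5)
Your proof is correct, but it takes a genuinely different route from the paper. The paper argues by contradiction using limiting \emph{function values}: it shows $\lim_{k\to\infty} d_{\sigma_k}(x_k^*;\mathcal{D})^2$ equals $\Vert\frac{1}{m}\sum_l x_{j_l}-x_{j_1}\Vert^2>0$ when the nearest data point is non-unique and $0$ when it is unique, and from this asserts the local max/min classification. You instead perform an explicit second-order analysis, computing $\nabla^2 d_{\sigma_k}^2(x) = I - \sigma_k^{-2}\Sigma_w(x)$ with $\Sigma_w$ the softmax-weighted data covariance about $\bar{x}(x)$, and reading off definiteness as $\sigma_k\to 0$: the weighted covariance vanishes faster than $\sigma_k^2$ in the unique case and has trace tending to $r^2>0$ in the non-unique case. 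Your version is arguably the more self-contained argument, since the paper's inference ``the limiting value is positive, hence $x_k^*$ is a local maximizer'' is a statement about values rather than curvature and is not fully justified as written; your Hessian computation supplies exactly the missing curvature information. Two small points to tighten: (i) in the unique case, the $j$-th term $w_j(x_j-x_k^*)(x_j-x_k^*)^\top$ of $\Sigma_w(x_k^*)$ is not covered by the $O(\exp[-\delta/(2\sigma_k^2)])$ bound on the weights alone --- you need the additional observation that stationarity forces $\Vert x_k^*-x_j\Vert = \Vert\sum_{i\neq j}w_i(x_i-x_j)\Vert$ to be itself exponentially small, which makes that term even smaller; (ii) both your proof and the paper's establish the claim only for sufficiently small $\sigma_k$ (for two points at $\pm 1$ and $\sigma_k>1$, the midpoint is a genuine local minimizer with a non-unique nearest neighbor), and both implicitly assume the gap $\delta$ and the radius $r$ stay bounded away from zero along the sequence of stationary points; this matches the paper's level of rigor, so it is not a defect relative to the intended use in \Cref{prop:data_stability}.
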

\begin{proof}
We first prove that $x_j$ is unique if $x_k^*$ is a local minimizer by contradiction. If there are multiple minimizers $x_{j_1}, \dots, x_{j_m} \in \argmin_{x_i\in\mathcal{D}} \|x_k^* - x_i\|^2$, we have that
\begin{equation}
    \begin{aligned}
        \lim_{k \rightarrow \infty} d_{\sigma_k}(x_k^*;\mathcal{D})^2 &= \lim_{k \rightarrow \infty} - \sigma^{2}_k \log \left[\sum_i\exp \left[-\textstyle\frac{1}{2\sigma^{2}_k}\|x_k^* - x_i\|^2\right]\right] \\
        &=  - \log \left[\lim_{k \rightarrow \infty} \left(\sum_i\exp \left[ -\textstyle\frac{1}{2\sigma^{2}_k} \|x_k^* - x_i\|^2\right]\right)^{\sigma^{2}_k}\right] \\
        &= \min_i \lim_{k \rightarrow \infty} \|x_k^* - x_{i}\|^2 \\
        &= \lim_{k \rightarrow \infty} \|x_k^* - x_{j_1}\|^2 \\
        &=\|\frac{1}{m} \sum_{l=1}^m x_{j_l} - x_{j_1}\|^2.
    \end{aligned}
\end{equation}
We observe that $x_k^*$ is a local maximizer of $\min_x d_{\sigma_k}(x;\mathcal{D})^2$, which raises contradiction.

Similarly, we prove that $x_j$ is not unique if $x_k^*$ is a local maximizer or saddle point by contradiction. Assume $x_j$ is the unique minimizer of $\min_{x_i\in\mathcal{D}} \|x_k^* - x_i\|^2$, we have
\begin{equation}
    \begin{aligned}
        \lim_{k \rightarrow \infty} d_{\sigma_k}(x_k^*;\mathcal{D})^2 &=  \min_i \lim_{k \rightarrow \infty} \|x_k^* - x_{i}\|^2 \\
        &= \lim_{k \rightarrow \infty} \|x_k^* - x_{j}\|^2 \\
        &= 0.
    \end{aligned}
\end{equation}
Hence, $x_k^*$ can not be a local maximizer or saddle point of $\min_x d_{\sigma_k}(x;\mathcal{D})^2$, which raises contradiction.
\end{proof}

Since gradient descent converges to a local minimizer almost surely with random initialization \cite{lee2016gradient} and appropriate step sizes, there is a unique $x_j$ for $x_k^*$ and \eqref{eq:data_stability} holds. We also note that a similar conclusion can be reached by using $\Gamma$-convergence of the softmin to min as $\sigma_k\rightarrow 0$ \cite{gamma}.

\subsection{Proof for \Cref{prop:admissibility}}

Let $L_e$ be the local Lipschitz constant of the error $e(x) \coloneqq f(x) - f_\theta(x)$ over some domain $\mathcal{Z} \subseteq \mathcal{X}$, and define $x_c \coloneqq \arg\min_{x_i \in \mathcal{D}}\frac{1}{2}\Vert x - x_i\Vert _2^2$, i.e., the closest data-point. Then, we have the following:
\begin{equation}
    \begin{aligned}
        \Vert f(x) - f_\theta(x)\Vert &\le \min_{x_i \in \mathcal{D}}\big[ e(x_i) + L_e \Vert x - x_i\Vert _2\big] \\
        &\le e(x_c) + L_e\Vert x - x_c\Vert_2 \\
        &= e(x_c) + \sqrt{2}L_e \sqrt{\frac{1}{2}\Vert x - x_c\Vert_2^2} \\
        &= e(x_c) + \sqrt{2}L_e \sqrt{\frac{1}{2}\min_{x_i \in \mathcal{D}}\Vert x - x_i\Vert_2^2}\\
        &= e(x_c) + \sqrt{2}L_e \sqrt{\min_{x_i \in \mathcal{D}} \frac{1}{2}\Vert x - x_i\Vert_2^2}\\
        &\le e(x_c) + \sqrt{2}L_e \sqrt{-\sigma^2 \log\bigg(\sum_i \exp\bigg(-\frac{1}{2\sigma^2} \Vert x - x_i\Vert^2\bigg)\bigg) + \sigma^2 \log N} \\
        &= e(x_c) + \sqrt{2}L_e \sqrt{d_\sigma(x; \mathcal{D})^2 + C_2}
    \end{aligned}
\end{equation}
where $C_2\coloneqq \sigma^2 \log N - C$, and $C$ is the constant defined in \eqref{eq:softmin}. In the second line, we use the fact that as $x_c$ is a feasible solution to the minimization in the first line, it is an upper bound on the optimal value. In the sixth line, we have used the fact that for any vector $v = [v_1, \ldots, v_n]^\top \in \mathbb{R}^n$, $\min\{v_1, \ldots, v_n\} \le -\frac{1}{t}\log\sum_{i=1}^n\exp(-tv_i) + \frac{\log n}{t}$ for some scaling $t$. In the final line, we have applied the definition of $d_\sigma(x; \mathcal{D})$ from \eqref{eq:softmin}.




\subsection{Proof for \Cref{prop:distance}}
The perturbed data distribution can be written as a sum of Gaussians, since 
\begin{equation}
    \begin{aligned}
        p_\sigma(x') & \coloneqq \int \hat{p}(x)\mathcal{N}(x';x,\sigma^2\mathbf{I})dx \\
                      & = \int \left[\frac{1}{N} \sum_i \delta(x_i)\right]\mathcal{N}(x';x,\sigma^2\mathbf{I})dx \\
                      & = \frac{1}{N} \sum_i \int \delta(x_i)\mathcal{N}(x';x,\sigma^2\mathbf{I}) \\
                      & = \frac{1}{N} \sum_i \mathcal{N}(x_i;x,\sigma^2\mathbf{I}) \\
                      & = \frac{1}{N} \sum_i \mathcal{N}(x;x_i,\sigma^2\mathbf{I})
    \end{aligned}
\end{equation}
Then we consider the negative log of the perturbed data distribution multiplied by $\sigma^2$,
\begin{equation}
    \begin{aligned}
        -\sigma^2\log p_\sigma(x) & = -\sigma^2\log \left[\frac{1}{N}\sum_i \mathcal{N}(x;x_i,\sigma^2\mathbf{I})\right] \\
                          & = -\sigma^2\log \left[\sum_i \mathcal{N}(x; x_i,\sigma^2\mathbf{I})\right] + \log N  \\
                          & = -\sigma^2\log \left[\frac{1}{\sqrt{(2\pi \sigma)^n}}\sum_i \exp \left[-\frac{1}{2\sigma^2}\|x - x_i\|^2\right]\right] + \sigma^2\log N \\
                          & = -\sigma^2\log \left[\sum_i \exp\left[-\frac{1}{2\sigma^2}\|x - x_i\|^2\right]\right] + \sigma^2\log N +\sigma^2\frac{n}{2}\log(2\pi\sigma) \\
                          & = -\sigma^2\mathsf{LogSumExp}_i \left[-\frac{1}{2\sigma^2}\|x - x_i\|^2\right] + C(N,n,\mathbf{\Sigma}) \\
                          & = \mathsf{Softmin}_\sigma\left[\frac{1}{2}\|x - x_i\|^2\right] + C(N,n,\mathbf{\Sigma})
    \end{aligned}
\end{equation}
where we define $C(N,n,\mathbf{\Sigma})\coloneqq \sigma^2(\log N + n/2\log(2\pi\sigma))$.

\subsection{Function Error vs. Gradient Error}\label{app:gradienterr}
We illustrate with an example that in the finite error regime, bounded function error does not necessarily imply bounded gradient error.

Suppose we are given a function $f(x):\mathbb{R}\rightarrow \mathbb{R}$, and another function of the form 
\begin{equation}
    g(x) = f(x) + \alpha \cos(\omega x) 
\end{equation}
Then, the error between the two functions is bounded by 
\begin{equation}
    e(x) \coloneqq \|f(x) - g(x)\| \leq \alpha \cos(\omega x) \leq \alpha
\end{equation}
for all $x\in\mathbb{R}.$ One might make $\alpha$ arbitrarily small (but not zero) to decrease the error.

However, consider the error in the gradients, 
\begin{equation}
    e_\nabla (x) \coloneqq \|\nabla f(x) - \nabla g(x)\| \leq \alpha\omega \sin(\omega x) \leq \alpha\omega
\end{equation}
which now scales with the frequency term $\omega$, which can be arbitrarily scaled up to increase gradient error.

\section{Details of the Planning algorithm}

\subsection{Computation of Gradients}\label{app:gradients}
Recall that the gradient of $\log  p_\sigma (x_i,u_i)$ cost w.r.t. the input variable $u_j$ can be written as
\begin{equation}
    \nabla_{u_j} \log  p(x_i,u_i) =  \nabla_{x_i} \log p(x_i,u_i)\mathbf{D}_{u_j} x_i + \nabla_{u_i} \log p(x_i,u_i) \mathbf{D}_{u_j} u_i 
\end{equation}
where $\mathbf{D}$ denotes the Jacobian. Writing the dependence on each variable more explicitly, we have 
\begin{align}
    \nabla_{u_j} \log  p(x_i(u_j),u_i(u_j)) & =  \nabla_{x_i} \log p(x_i(u_j),u_i(u_j))\mathbf{D}_{u_j} x_i(u_j) \\ & + \nabla_{u_i} \log p(x_i(u_j),u_i(u_j)) \mathbf{D}_{u_j} u_i(u_j) 
\end{align}
where we note that $\mathbf{D}_{u_j} u_i=1$ if $i=j$ and $0$ otherwise. As long as $i>j$, we also note that $x_i$ has a dependence on $u_j$. Instead of computing this gradient explicitly, we first rollout the trajectory to compute $x_i(u_j), u_i(u_j)$, and compute the score function. Then we ask: which quantity do we need such that it gives us the above expression when differentiated w.r.t. $u_j$? We use the following quantity, 
\begin{equation}
    c_{ij} = s_x(x_i,u_i)x_i(u_j) + s_u(x_i,u_i)u_i(u_j)
\end{equation}
where the score terms have been \emph{detached} from the computation graph. Note that $c_{ij}$ is a scalar and allows us to use reverse-mode automatic differentiation tools such as \texttt{pytorch} \cite{pytorch}.

\subsection{Noise-Annealing During Optimization}\label{app:annealing}
Additionally, we anneal the noise level during iterations of Adam. Given a sequence $\sigma_k$ with $K$ being the total number of annealing steps, we run Adam for $\text{max iter}_\text{max}/K$ iterations, then run it with the next noise level. 

\subsection{Algorithm Block}\label{app:algorithmdescription}
We first give some abbreviations to simply the algorithm description. Given the dynamics, we write down the value function of \Cref{eq:mbdp}, conditioned in the noise level $\sigma$, as 
\begin{equation}
    \begin{aligned}
    V(u_{1:T}; \sigma)  & \coloneqq \sum^T_{t=1} r_t(x_t,u_t) + \beta \sigma^2 \sum^T_{t=1} \log p_\sigma(x_t, u_t; \mathcal{D}) \\
     & \text{s.t.} \quad x_{t+1} = f_\theta(x_t, u_t) \quad \forall t
    \end{aligned}
\end{equation}
\begin{algorithm}
\caption{Score-Guided Planning (Gradient Descent Version)}\label{alg:cap}
\begin{algorithmic}
\Require Learned dynamics $f_\theta$, score function $s(x,u)$, noise schedule sequence $\sigma_k$.
\Require Initial guess $u_{1:T}^0$, initial noise $\sigma_0$.
\While{ \textbf{not} converged }
\State{Rollout $u_{1:T}^k$ and compute state-input trajectory $(x_t,u_t)^k$.}
\State{Compute $\nabla_{u_{1:T}} V(u_{1:T}^k;\sigma_k)$ using score estimator $s(x,u,\sigma)$} \Comment{\Cref{eq:gradientcomputation}, \Cref{app:gradients}}
\State{$u_{1:T}^{k+1}\leftarrow u_{1:T}^k - h \nabla_{u_{1:T}} V(u_{1:T}^k,\sigma_k)$} \Comment{Gradient Descent with stepsize $h$}
\State{$k \leftarrow k+1$}
\EndWhile
\end{algorithmic}
\end{algorithm}
We note that in our implementation, we use Adam \cite{adam} instead of doing gradient descent.

\subsection{Connection to Diffuser}\label{app:diffuser}
We first lift the dynamics constraint into a quadratic penalty and write the penalty as $\log  p(x_{t+1} | x_t,u_t)$. This equivalence is seen by considering a case where we fix $x_t,u_t$ and perturb $x_{t+1}$ with a Gaussian noise of scale $\sigma$. If $(x_t,u_t,x_{t+1})$ is in the dataset, it obeys $x_{t+1} = f(x_t,u_t)$ under real-world dynamics $f$. This allows us to write
\begin{equation}
    \begin{aligned}
        p_\sigma(x_{t+1} | x_t, u_t) & = \mathcal{N}(x_{t+1} | f(x_t,u_t),\sigma^2\mathbf{I}) \\
        \log p_\sigma(x_{t+1} | x_t, u_t) & = -\frac{1}{2}\|x_{t+1} - f(x_t,u_t)\|^2 + C
    \end{aligned}
\end{equation}
where $C$ is some constant that does not effect the objective. Then, we rewrite our objective using the factoring $ p(x_t,u_t)= p(u_t|x) p(x_t)$. This allows us to rewrite the objective of \Cref{eq:mbdp} as 
    \begin{equation}
        \begin{aligned}
        & \sum^T_{t=1} r_t(x_t,u_t) + \beta \sum^T_{t=1} \log  p(u_t|x_t) + \beta \sum^T_{t=1} \log  p(x_t) + \beta \sum^T_{t=1} \log  p(x_{t+1}|x_t,u_t) \\
       = & V(x_{1:T},u_{1:T}) + \beta\log p(x_{1:T},u_{1:T}) + \sum^T_{t=1} \log p(x),
        \end{aligned}
    \end{equation}
    where the first two terms are the objectives in Diffuser \cite{diffusionplanning}. 
\subsection{First-Order Policy Search}\label{app:policysearch}
We note that our original method for gradient computation can easily be extended to the setting of feedback first-order policy search, where we define the uncertainty-penalized value function as
\begin{equation}
    \begin{aligned}
    \max_{\alpha} \quad & \mathbb{E}_{x_1\sim\rho} \left[\sum^T_{t=1} r_t(x_t,u_t) + \beta \sigma^2 \sum^T_{t=1} \log  p_\sigma(x_t, u_t; \mathcal{D})\right] \\
    \text{s.t.} \quad & x_{t+1} = f_\theta (x_t, u_t), u_t=\pi_\alpha(x_t) \quad \forall t \in[1,T],
    \end{aligned}
\end{equation}
where $\rho$ is some distribution of initial conditions. We rewrite the objective with an explicit dependence on $\alpha$, and use a Monte-Carlo estimator for the gradient of the stochastic objective,
\begin{equation}
    \begin{aligned}
    & \nabla_\alpha \mathbb{E}_{x_1\sim\rho} \left[\sum^T_{t=1} r_t(x_t(\alpha),u_t(\alpha)) + \beta \sigma^2 \sum^T_{t=1} \log  p_\sigma(x_t(\alpha), u_t(\alpha); \mathcal{D})\right] \\ = & \mathbb{E}_{x_1\sim\rho} \nabla_\alpha \left[\sum^T_{t=1} r_t(x_t(\alpha),u_t(\alpha)) + \beta \sigma^2 \sum^T_{t=1} \log  p_\sigma(x_t(\alpha), u_t(\alpha); \mathcal{D})\right] \\
    \approx & \frac{1}{N}\sum^N_{i=1} \nabla_\alpha \left[\sum^T_{t=1} r_t(x_t(\alpha),u_t(\alpha)) + \beta \sigma^2 \sum^T_{t=1} \log  p_\sigma(x_t(\alpha), u_t(\alpha); \mathcal{D}) \quad \text{s.t.} \quad x_1=x_i\sim\rho\right], \\
    \end{aligned}
\end{equation}
where the last equation denotes that fixing the initial condition to $x_i$ sampled from $\rho$, and $N$ is the number of samples in the Monte-Carlo process. Since $r_t$ and $f_\theta$ are differentiable, we can obtain the gradient 
\begin{equation}
    \nabla_\alpha \sum^T_{t=0} r_t(x_t(\alpha), u_t(\alpha))
\end{equation}
after rolling out the closed-loop system starting from $x_i$ and using automatic differentiation w.r.t. policy parameters $\alpha$. To compute the gradient w.r.t. the score function, we similarly use the chain rule,

and differentiate it w.r.t $\alpha$, which lets us compute 
\begin{equation}
    \begin{aligned}
    \frac{\partial}{\partial \alpha}\left[ \sum^T_{t=1} \log  p_\sigma(x_t,u_t)\right] & = \sum^T_{t=1} \frac{\partial}{\partial \alpha} \log p_\sigma (x_t(\alpha),u_t(\alpha)) \\
    & = \sum^T_{t=1} \frac{\partial}{\partial x_t} \log  p_\sigma (x_t,u_t) \frac{\partial x_t}{\partial \alpha} + \frac{\partial}{\partial u_t} \log  p_\sigma (x_t,u_t) \frac{\partial u_t}{\partial \alpha} \\
    & =  \sum^T_{t=1} s_x(x_t,u_t;\sigma) \frac{\partial x_t}{\partial \alpha} + s_u(x_t,u_t;\sigma) \frac{\partial u_t}{\partial \alpha}
    \end{aligned}
\end{equation}
where the last term is obtained by differentiating 
\begin{equation}
    \begin{aligned}
    \sum^T_{t=1} s_x(x_t,u_t;\sigma)x_t + s_u(x_t,u_t;\sigma)u_t
    \end{aligned}
\end{equation}
after detaching $s_x$ and $s_u$ from the computation graph. 

\subsection{Imitation Learning}\label{app:imitationlearning}
We give more intuition for why maximizing the state-action likelihood leads to imitation learning. If the empirical data comes from an expert demonstrator, maximizing data likelihood leads to minimization of cross entropy between the state-action pairs encountered during planning and the state-action occupation measure of the demonstration policy, which is estimated with the perturbed empirical distribution $p_\sigma(x_t,u_t)$,
\begin{equation}
    \sum_t \log p_\sigma(x_t,u_t) = \sum_t \log p_\sigma(u_t|x_t) + \sum_t \log p_\sigma(x).
\end{equation}
Note that the $\log p_\sigma(u_t|x_t)$ is identical to the Behavior Cloning (BC) objective, while $\log p_\sigma(x_t)$ drives future states of the plan closer to states in the dataset. We note that Adversarial Inverse Reinforcement Learning (AIRL) \cite{fu2018learning} minimizes a similar objective as ours \cite{divergence}. 
\section{Experiment Details}\label{app:experiments}
\subsection{Cartpole with Learned Dynamics}
\paragraph{Environment.}
We use the cart-pole dynamics model in \cite[chapter~3.2]{underactuated}, with the cost function being
\begin{align}
    c_t(x_t, u_t) = \begin{cases}
        ||x_t - x_g||^2_\mathbf{Q} &\text{ if } t=T\\
        0 &\text{else.}
    \end{cases}
\end{align}
$\mathbf{Q}=\text{diag}(1, 1, 0.1, 0.1)$. We choose the planning horizon $T=60$.
\paragraph{Training.}
We randomly collected a dataset of size $N=1,000,000$ within the red box region in the state space. The dynamics model is an MLP with 3 hidden layers of width $(64, 64, 32)$. For ensemble approach we use 6 different dynamics models, all with the same network structures. 

We train a score function estimator, represented by an MLP with 4 hidden layers of width 1024. The network is trained form 400 epochs with a batch size of 2048.
\paragraph{Parameters}
During motion planning, for Adam optimizer we use a learning rate of $0.01$. For CEM approach we use a population size of 10, with standard variance $\sigma=0.05$, and we take the top 4 seeds to update the mean in the next iteration.

\paragraph{Data Distance Estimator.}
To train a data distance estimator, we introduce a function approximator $d_\eta:\mathbb{R}^n\times\mathbb{R}_+\rightarrow\mathbb{R}$ parametrized by $\eta$ to predict the noise-dependent $\mathsf{Softmin}$ distance. The training objective is given by 
\begin{equation}
    \min_\eta \frac{1}{2}\mathbb{E}_{x\in\Omega,\sigma\in[0,\sigma_{max}]}\left[\frac{1}{\sigma^2}\left|d(x,\sigma) - \textsf{Softmin}_{x_i\in\mathcal{D}}\frac{1}{2}\|x - x_i\|^2_{\sigma^-2\mathbf{I}}\right|\right]
\end{equation}
where $\Omega$ is a large enough domain that covers the data distribution $\mathcal{D}$. For small datasets, it is possible to loop through all $x_i$ in the dataset to compute this loss at every iteration. However, this training can get prohibitive as all of the training set needs to be considered to compute the loss, preventing batch training out of the training set.  

\subsection{D4RL Dataset}

\paragraph{Environment.} We directly use the D4RL dataset \cite{d4rl} Mujoco tasks \cite{mujoco} with 3 different environments of halfcheetah, walker2d, and hopper. We additionally use different sources of data with random, medium, and medium-expert.

\paragraph{Training.} The dynamics and the score functions are both parametrized with MLP with 4 hidden layers of width 1024. The noise-conditioned score function is implemented by treating each level of noise $\sigma_k$ as an integer token, that gets embedded into a $1024$ vector and gets multiplied with the output of each layer. This acts similar to a masking of the weights depending on the level of noise. The D4RL environment does not provide us with a differentiable reward function, so we additionally train an estimator for the reward. We empirically saw that for score function estimation, wide shallow networks performed better. We train both instances for 1000 iterations with Adam, with a learning rate of $1e-3$ and batch size of $2048$. 

We additionally set a noise schedule to be a cosine schedule that anneals from $\sigma=0.2$ to $\sigma=0.01$ for $10$ steps in the normalized space of $x,u$. 

\paragraph{Parameters} We used a range of $\beta$s between $1e^{-3}$ and $1e^{-1}$ depending on the environment, where in some cases it helped to be more reliant on reward, and in others it's desirable to rely on imitation. We use a MPC with $T=5$ and optimize it for $50$ iterations with an aggressive learning rate of $1e^{-1}$. 

\subsection{Pixel Single Integrator}

\paragraph{Environment.} In this environment, we have a 2D single integrator $f(x_t,u_t)=x_t + u_t$, $x_t \in \mathbb{R}^2, u_t \in \mathbb{R}^2$ as the underlying ground-truth dynamics; however, instead of raw states $x_t$, we observe a $32\times 32$ grayscale image $y_t = h(x_t) \in \mathbb{R}^{32 \times 32}$, which are top-down renderings of the robot. In these observations, the position of the robot is represented with a dot. Moreover, we assume that we do not directly assign the 2D control input $u_t$, but instead propose a $32\times 32$ grayscale image $\hat{u}_t$, where the value of the 2D control action $u_t$ is extracted from the image via a spatial average:
\begin{equation}
    u_t = \sum_{(p_x, p_y)} g_u(p_x, p_y)  \hat{u}_t(p_x, p_y),
\end{equation}
where the sum loops over each pixel $(p_x, p_y) \in \{1, \ldots, 32\}^2$, $y_t(p_x, p_y)$ refers to the intensity of the image at pixel $(p_x, p_y)$, and $g_u: \mathbb{Z} \times \mathbb{Z} \rightarrow \mathbb{R}^2$ is a grid function mapping from pixel $(p_x, p_y)$ to a corresponding control action. The control image $\hat u_t$ is normalized such that its overall intensity sums to $1$.
Given some goal $x_g$, the reward is set to be the $-c_t(x_t,u_t)$, where the cost $c_t(x_t,u_t)$ is 
\begin{equation}
   c_t(x_t,u_t) = \begin{cases}

   \|x_t - x_g\|_{\mathbf{Q}_t}^2 + \|u_t\|_{\mathbf{R}}^2 & \text{ if } t = T \\ 
   \|x_t - x_g\|_{\mathbf{Q}}^2 + \|u_t\|^2_\mathbf{R} & \text{ else, } \\      
   \end{cases} 
\end{equation}
To evaluate this cost function for the planned sequence of image observations and control images, the states $x_t$ are also extracted from the image observations through a similar spatial averaging:
\begin{equation}
    x_t = \sum_{(p_x, p_y)} g_\textrm{im}(p_x, p_y)  y_t(p_x, p_y),
\end{equation}
where $g_\textrm{im}: \mathcal{Z} \times \mathcal{Z} \rightarrow \mathbb{R}^2$ is a grid function mapping from pixel $(p_x, p_y)$ to a corresponding state.

In other words, we have running costs for the state and input, and a different terminal cost for the state. We set $\mathbf{R}=6.5\mathbf{I}$, $\mathbf{Q}=500\mathbf{I}$, and $\mathbf{Q}_d=1000\mathbf{I}$, and plan with a horizon of $T=15$. 


\paragraph{Training.} We collect a randomly collected dataset of size $N=200,000$, with underlying 2D data sampled from $x_t \in [-1, 1]^2$ and $u_t \in [-0.2, 0.2]^2$. Both the dynamics and the score function estimator are represented as U-Nets \cite{DBLP:conf/miccai/RonnebergerFB15}, with the architecture coming from \cite{DBLP:conf/nips/0011E20}. 

\paragraph{Parameters.} We found that $\beta=0.5$ is sufficient for the penalty parameter. Results are obtained within $1450$ iterations with a learning rate of 0.03. In representing the noise-conditioned score function, we use 232 smoothing parameters $\{\sigma_k\}_{k=1}^{232}$, from $\sigma_1 = 50$ to $\sigma_{232} = 0.01$.

\paragraph{Baselines.} For gradient-based planning with ensembles, we set $\beta = 0.5$ and use an ensemble of size 10. For CEM with ensembles, we set $\beta = 0.5$, with an ensemble of size 5 (we only used the first five networks in the original ensemble of size 10 due to RAM limitations).

\paragraph{Numerical results.} 

\begin{table}[thpb]
\footnotesize
\begin{centering}
\begin{tabular}{lrr}\hline
\textbf{Method} & \textbf{Cost of plan} & \textbf{Actual achieved cost} \\\hline
SGP (Ours) & 87.61 & \textbf{124.72}\\
Vanilla MBRL & 0.80 & 1862.06\\
Ensembles & 4031.54 & 1380.41\\
CEM & 2244.72 & 2901.95\\
\end{tabular}
\caption{Costs for pixel-space single integrator example (lower is better).}\label{tab:pixel}
\end{centering}
\end{table}
In Table \ref{tab:pixel}, we report numerical results on the costs achieved by our approach and the various baselines tested in the pixel-space single integrator example (Fig. \ref{fig:pixelspace} in the main text). SGP (our method, Method A in Fig. \ref{fig:pixelspace}) achieves a low cost at planning time (87), and the actual achieved cost when rolling out the controls open-loop on the true system is only slightly worse (124), since the model error is kept small. In contrast, for vanilla MBRL (Method B in Fig. \ref{fig:pixelspace} in the main text), an overly-optimistic cost is achieved (0.8) at planning time, due to the generation of unrealistic images that exploit subtleties in how the cost is calculated; however, when executing the resulting controls, the cost greatly degrades (1862) due to the model mismatch. Using ensembles (Method C in Fig. \ref{fig:pixelspace} in the main text) leads to the optimizer failing to find good control sequence at planning time, due to the poor optimization landscape when planning with ensembles (cost is 4031); this translates to a high cost when rolling out on the true system as well (cost is 1380). CEM (Method D in Fig. \ref{fig:pixelspace} in the main text) similarly does not find a good control sequence at planning time, due to the high dimensionality of the search space (cost of 2244); the actual cost achieved is similarly high (cost of 2901).

\subsection{Box Pushing with Marker Dynamics}

\paragraph{Environment.} We prepare a box-pushing environment where we assume that the box follows quasistatic dynamics, which allows us to treat the marker positions directly as state of the box that is bijective with its pose. We use 2D coordinates for each markers, and append the pusher position, also in 2D, resulting in $x_t\in\mathbb{R}^{12}$. The pusher is given a relative position command with a relatively large step size \cite{quasistatic}. In addition, we give the robot knowledge of the pusher dynamics, $x^{\text{pusher}}_{t+1} = x^{\text{pusher}}_t + u_t$. The general goal of the task is to push the box and align the edge of the box with the blue tape line. 

We formulate our cost as
\begin{equation}
   c_t(x_t,u_t) = \begin{cases}

   \|x_t^{\text{marker}} - x_g^\text{marker}\|_{\mathbf{Q}_T}^2 & \text{ if } t = T \\ 
   \|u_t\|^2_\mathbf{R} & \text{ else, } \\      
   \end{cases} 
\end{equation}
where $\mathbf{Q}_T=\mathbf{I}$ and $\mathbf{R}=0.1\mathbf{I}$. In order to get $x_g^\text{marker}$, we place the box where we want the goal to be and measure the position of the markers. 

\paragraph{Training.} We collect $100$ demonstration trajectories resulting in $750$ pairs of $(x_t,u_t,x_{t+1})$. The dynamics and the score functions are learned with a MLP of 4 hidden layers with size $1024$, with the noise-conditioned score estimator being trained similar to the D4RL dataset with multiplicative token embeddings. We train for $500$ iterations with a batch size of $32$. 

We additionally set a noise schedule to be a cosine schedule that anneals from $\sigma=0.2$ to $\sigma=0.01$ for $10$ steps. 

\paragraph{Parameters.} We observed that $\beta=1e^{-2}$ performs well for all the examples, with a learning rate of $0.1$ and $50$ iterations. We found that a horizon of $T=4$ was sufficient for our setup.

\paragraph{Numerical Results} We report numerical results for the box pushing demo in Table \ref{tab:sgp}, which illustrates that SGP achieves much lower cost compared to vanilla MBRL.

\begin{table}[thpb]\label{tab:sgp}
\footnotesize
\begin{centering}
\begin{tabular}{lrr}\hline
\textbf{Goal} & \textbf{Method} & \textbf{Achieved Cost} \\\hline
Left & SGP (Ours) & \textbf{8.86} \scriptsize $\pm$ 4.54 \\
Left & Vanilla MBRL & 51.18 \scriptsize $\pm$ 6.93 \\\hline
Center & SGP (Ours) & \textbf{7.74} \scriptsize $\pm$ 2.23\\
Center & Vanilla MBRL & 35.93 \scriptsize $\pm$ 1.18 \\\hline 
Right & SGP (Ours) & \textbf{9.05} \scriptsize $\pm$ 3.45 \\ 
Right & Vanilla MBRL & 61.40 \scriptsize $\pm$ 3.82 \\\hline
\end{tabular}
\caption{Cost for the Keypoints hardware Example (lower is better). We compare the results of running MPC using vanilla MBRL, vs. SGP. The cost is evaluated on the final achieved trajectory on hardware. Numbers are averaged on 5 trials.}
\end{centering}
\end{table}

\section{Experiments on Hyperparameters on SGP}
\subsection{Effect of Penalty term $\beta$.}
Note that $\beta$ appears in the objective to trade off the reward signal and the uncertainty penalty term,
\begin{equation}
    \begin{aligned}
    \max_{x_{1:T},u_{1:T}} \quad & \sum^T_{t=1} r_t(x_t,u_t) + \beta \sigma^2 \sum^T_{t=1} \log p_\sigma(x_t, u_t; \mathcal{D}) \\
    \text{s.t.} \quad & x_{t+1} = f_\theta (x_t, u_t) \quad \forall t \in[1,T].
    \end{aligned}
\end{equation}
We first give intuition for two extreme cases.
\subsubsection{Vanilla Model-Based Reinforcement Learning (MBRL), $\beta=0$}
When $\beta$ is $0$, no uncertainty is penalized, and the planning problem becomes equivalent to vanilla Model-Based Reinforcement Learning (MBRL), or planning with learned dynamics. While this achieves best performance in terms of $f_\theta$, the optimizer is likely to not perform well under the true dynamics $f$ due to model bias. 
\subsubsection{Imitation Learning / Data Landing, $\beta=\infty$}
When $\beta=\infty$, the reward term is discarded and the problem simply turns to that of uncertainty minimization. When data comes from random sources, the problem will then plan a trajectory to stay near the vicinity of the data depending on the value of $\sigma$.

Interestingly, the case for when $\beta=\infty$ can be understood as a case of imitation learning \emph{if} data comes from an expert demonstrator. We can write down 
\begin{equation}
    \sum^T_{t=1} \log p_\sigma(x_t, u_t;\mathcal{D}) =\sum^T_{t=1} \mathsf{KL}(\rho(x_t,u_t)\| \tilde{p}(x_t,u_t))
\end{equation}
where $\rho$ is the occupancy measure of the plan, and thus maximizing this quantity leads to a form of distribution matching between this occupancy measure and the perturbed empirical distribution of data. We have more details in Appendix B.6.

\subsubsection{Experimental Validation}\label{sec:experimental}

To illustrate the effects of $\beta$, we prepare a single-integrator environment where the dynamics obey $x_{t+1}=x_t+u_t$, except in the circular region where actuation gets lost and $x_{t+1} = x_t$. For instance, this could simulate a pit in a self-driving environment. The effect of setting different $\beta$s are illustrated in Fig.\ref{fig:betatest}. 

As hypothesized, too high of a $\beta$ results in not making progress due to no reward signal - as a result, the cost is quite high. Yet, the dynamics error is very low. On the other hand, too low of a $\beta$ results in a big dynamics error as the single integrator goes into the middle pit region and makes no progress. As a result, the overall cost is very high as well. 

We note that around $\beta=1e^{-2}$ and $\beta=1e^0$ is the sweet spot for this experiment, where the agent was able to circumnavigate the pit as it has not seen data there, and still make meaningful progress towards the goal. 

\begin{figure}[thpb]
	\centering\includegraphics[width = 1.0\textwidth]{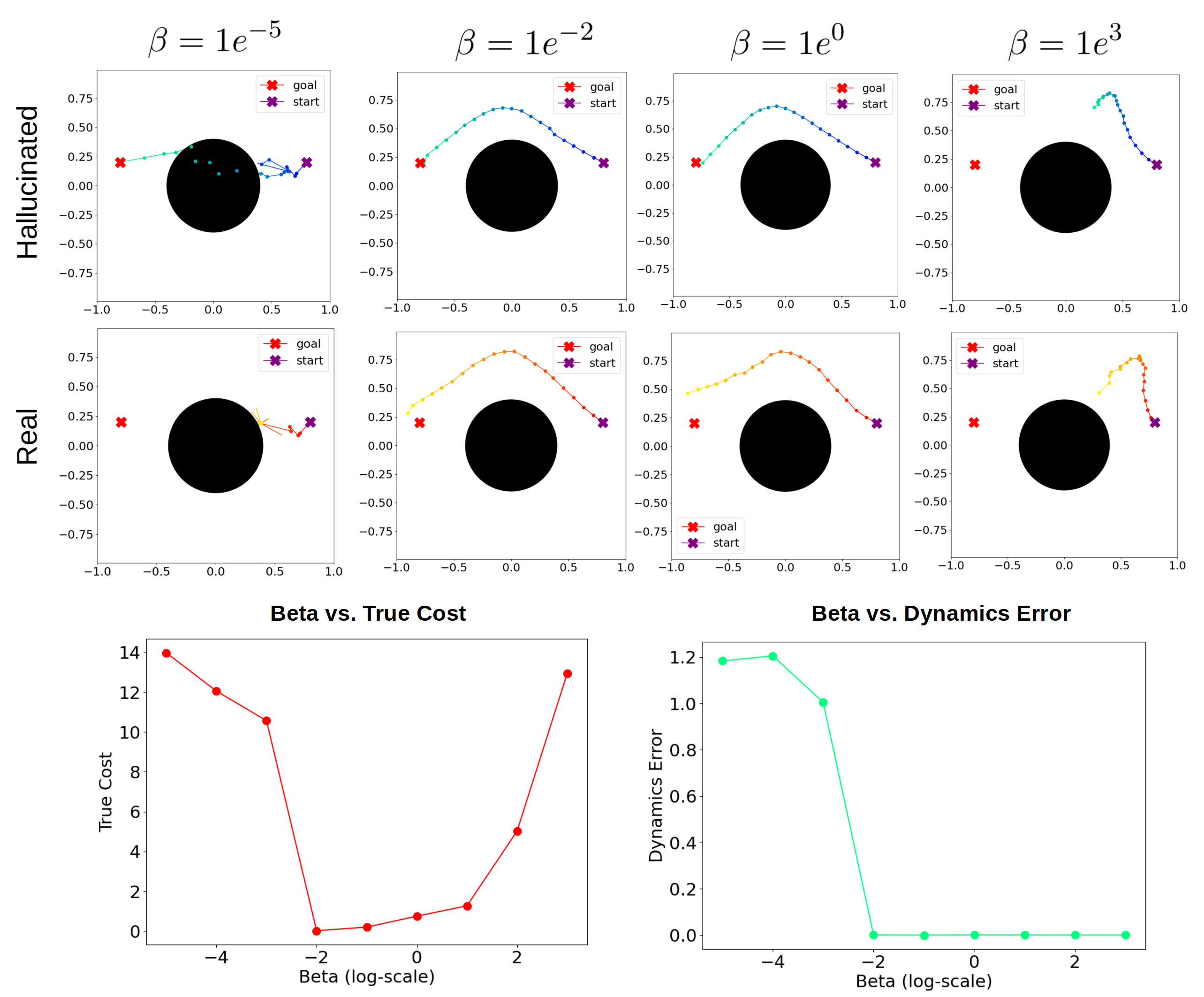}
    \caption{Top row: The hallucinated trajectory with different choices of $\beta$. Middle row: real rollouts with different choices of $\beta$. Bottom row, left: sweep of $\beta$ in log-scale and cost. Bottom row, right: sweep of $\beta$ and the dynamics error on the rolled out trajectory.}
    \label{fig:betatest}
\end{figure}

\subsubsection{Tuning Recommendation}
In practice we recommend setting $\beta$ between values of $\beta=1e^{-3}$ to $\beta=1e^{0}$ and explore performance tradeoffs by doing hyperparameter search.

\subsection{Effect of Noise Level $\sigma$.}
Similar to $\beta$, the performance of the algorithm deteriorates under too high / too low of a value for the injected noise $\sigma$.

\subsubsection{Effect of low $\sigma$}
The biggest factor preventing us from having too low of a $\sigma$ comes from the learning of the score function. As denoising score matching works by perturbing the data with the given noise $\sigma$, only regions within a small vicinity of the data will have been covered by score matching if $\sigma$ is low. As the training accuracy of neural networks deteriorates on regions it has not been trained on, the accuracy of score matching suffers greatly with too low of a $\sigma$. This effect is extensively illustrated by previous score matching papers such as \cite{songscorematching}. 

Another effect is that without normalization by $\sigma$, the log probabilities of the perturbed empirical distribution has a scaling that directly depends on $\sigma$. This is clear to see with a single data, for which the perturbed empirical distribution becomes a Gaussian.
\begin{equation}
    -\log p \approx \frac{x^2}{2\sigma^2} + C
\end{equation}

This suggests that as $\sigma$ becomes too small, the Lipschitz constant of the distribution becomes large, creating a landscape that is challenging for neural networks to learn. 

To visually validate this, we illustrate the efficacy of score matching as we vary $\sigma$ in Fig.\ref{fig:sigmacomparisonsimple}. As we hypothesize, score matching is very ineffective for small $\sigma$, and the actual gradients are very jagged / near-discontinuous in this regime. On the other hand, as we increase $\sigma$ to around $0.1$ and $0.2$ level, score matching successfully predicts gradients of log-likelihood. However, even for $\sigma=0.2$, note that we cannot predict the score accurately if we get too far away from the data, since the score estimator never sees these points during training.

\begin{figure}[thpb]
	\centering\includegraphics[width = 1.0\textwidth]{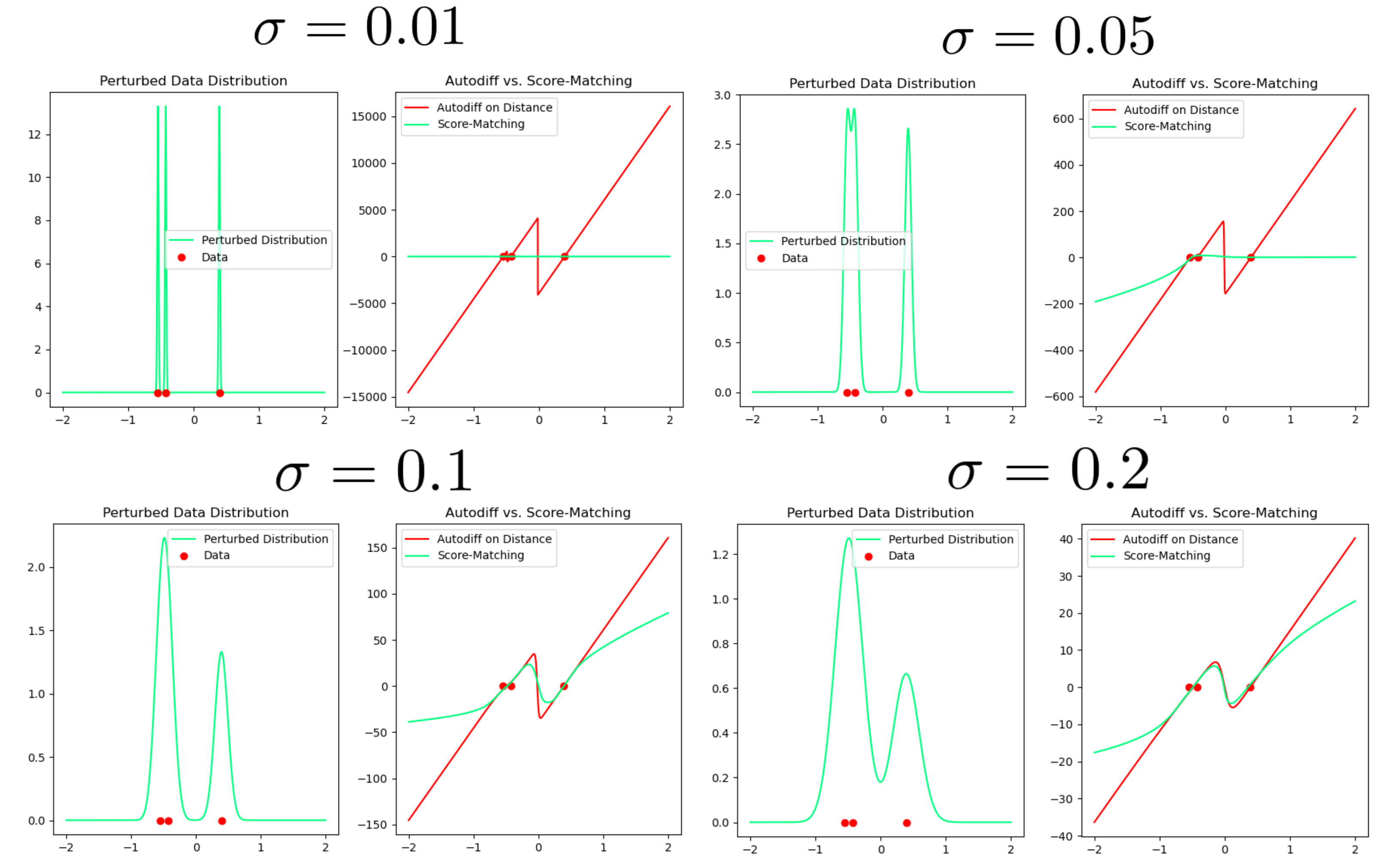}
    \vskip -0.1 true in
    \caption{Efficacy of Score Matching as we vary the perturbed noise $\sigma$. note that this $\sigma$ is in the unnormalized space.}
	\label{fig:sigmacomparisonsimple}
\end{figure}

\subsubsection{Effect of high $\sigma$}
If $\sigma$ is too high compared to the scale of how close data are together, then it is hard to distinguish from a pure Gaussian noise centered around the average of the data. Indeed, this is what the "nosing process" of diffusion models rely on. 

As the goal of SGP is to stay \emph{near} the data, not go towards the \emph{average} of data, it is not desirable to choose too high of a $\sigma$. In fact, we provide an adversarial case in the pit example, where we have not collected any data near the mean of the data. In such cases, too high of a $\sigma$ can lead to a catastrophic failure. 

\subsubsection{Experimental Validation}

We validate our choice of $\sigma$ by training a score estimator for the example in \Cref{sec:experimental} and visualizing its behavior. Our experiments validate our hypothesis that when $\sigma$ is too small, the score function is very inaccurate. On the other hand, too big of a $\sigma$ leads the score function to point towards the center of all the data - which in this case, corresponds to a region where data was not collected. In between we have a sweet spot where the score function pushes away from regions with no data. 

\begin{figure}[thpb]
	\centering\includegraphics[width = 0.8\textwidth]{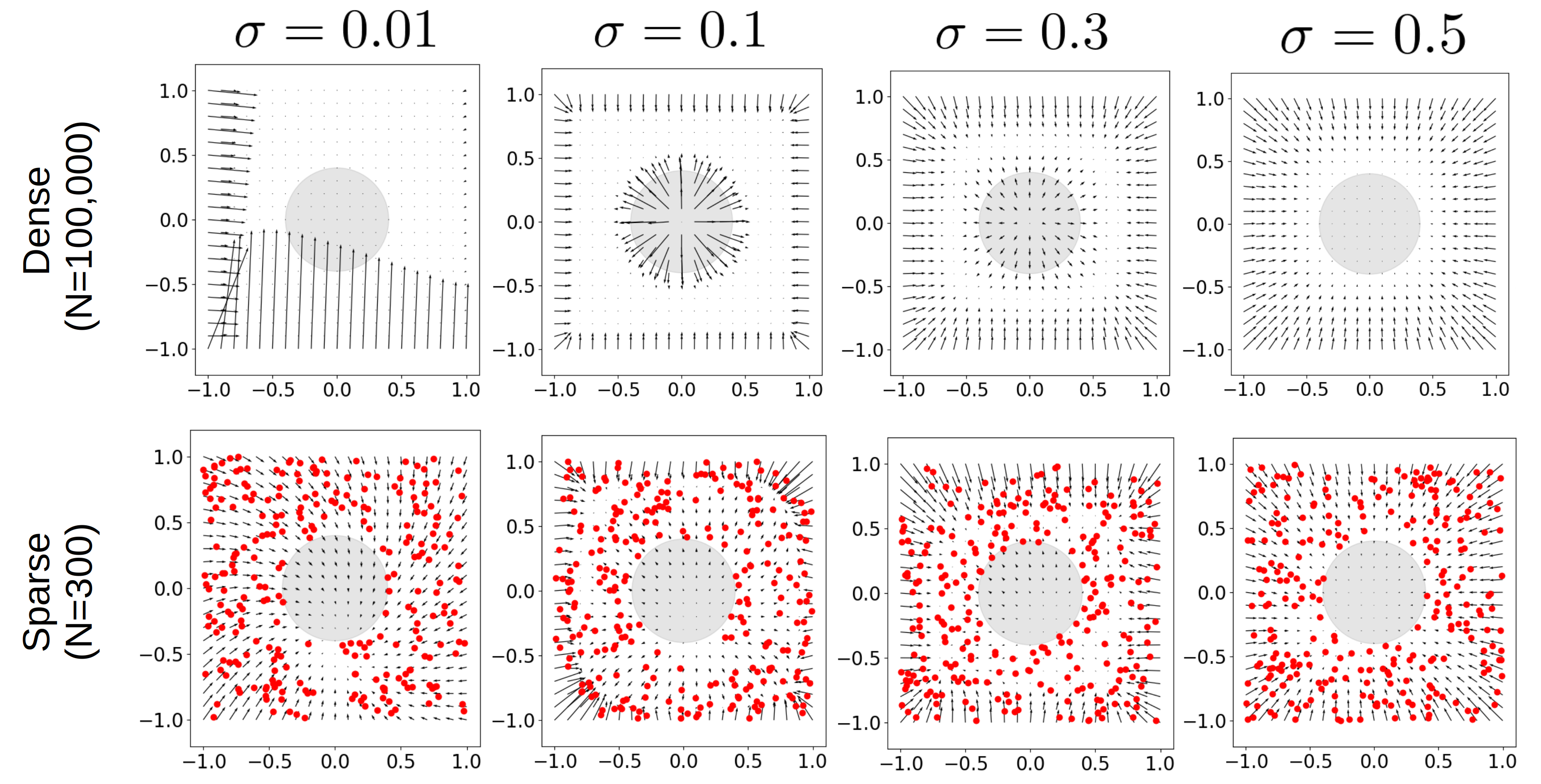}
    \vskip -0.1 true in
    \caption{Visualization of the learned score function $s(x)=\nabla_x \log \tilde{p}_\sigma(x)$, according to different $\sigma$ values. Top corresponds to a dense and bottom corresponds to a sparse dataset. The grey region corresponds to region with no data. Note that $\sigma$ corresponds to noise in the normalized space.}
	\label{fig:scorecomparison}
    \vskip -0.2 true in
\end{figure}

\subsubsection{Tuning Recommendation}
To see whether a user has chosen a good $\sigma$, we recommend that the user first tests the validity of the score estimator before attempting to solve any optimal control problem. This can be done by training a score estimator, and running gradient descent on the log probabilities using the score estimator as done in \cite{songscorematching}. A good score estimator should ensure that after some iterations, the iterates converge to samples within the data. 

We note that in the normalized space, we often find $\sigma=0.05\sim 0.3$ to be good. To tradeoff between accuracy and coverage within this range, we anneal $\sigma$ in between this range following \cite{songscorematching}.

\section{Experiments on Extrapolation}

One limitation of SGP that was mentioned was that the method does not have the ability to extrapolate. We show that,
\begin{enumerate}\itemsep -0.2em 
    \item On the trajectory level, our method can successfully extrapolate by stitching trajectories to create a trajectory that was not in the dataset.
    \item Not doing extrapolation can be a very practical method on pathological cases where extrapolation actually hurts. 
\end{enumerate}

\subsection{Trajectory-level Extrapolation by Stitching}
In order to show our trajectory-level extrapolation capabilities, we replicate the example in \cite{diffusionplanning} by preparing two types of demonstration trajectories. The first type of demonstration goes from the bottom left to right top, while the second type of demonstration goes from top left to bottom right. Then, SGP is asked to create a trajectory that goes from bottom left to bottom right - as this kind of demonstration has never been seen before, this requires use to extrapolate in the space of trajectories and perform a type of trajectory stitching. 

As our results in \Cref{fig:stitching} indicates, SGP is successful in stitching sub-trajectories from the two demonstrations to come up with an entirely new trajectory that was not in the demonstration set. Thus, our method is capable of performing extrapolation in the space of trajectories. 

\begin{figure}[thpb]
	\centering\includegraphics[width = 0.8\textwidth]{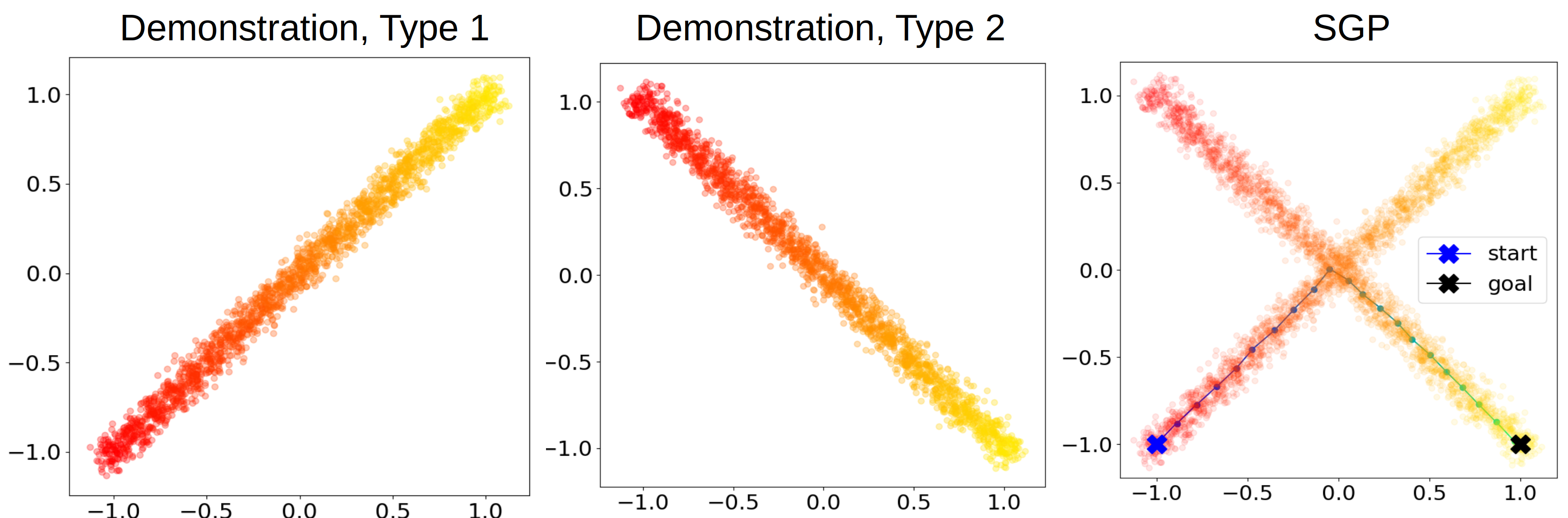}
    \vskip -0.1 true in
    \caption{Left: Demonstration trajectories that go from left bottom to right top. Center: demonstration trajectories that go from left top to right bottom. Right: performance of SGP when asked to find a trajectory from left bottom to right bottom, which was not in the dataset before.}
	\label{fig:stitching}
\end{figure}

\subsection{Dynamics-Level Extrapolation as a double-edged sword}

By actively constraining the state-action pairs to stay close to the dataset, SGP discourages dynamics-level extrapolation. While this can be a possible limitation, we also show that this is an important \emph{strength} of our method compared to methods that attempt to extrapolate. 

For this purpose, we consider the single-integrator with a data-hole inside, where the agent has not seen data before. The true dynamics is defined as
\begin{equation}
    x_{t+1} = \begin{cases}
        x_t + u_t & \text{ if outside hole} \\
        x_t & \text{ if inside hole }
    \end{cases}
\end{equation}

These scenarios are plausible if the hole region falls outside the normal operating conditions of the agent. For example, a self-driving car might always be forced to go around a pit, or patches of grass in a rotary. Naturally the dynamics are still defined in the physical world, but the agent has never seen data there. We test the performance of SGP vs. Ensembles in this scenario and plot the results in \Cref{fig:sgpvsensembles}

As the dynamics are very simple outside the hole, ensembles will learn the simple dynamics $x_{t+1} = x_t + u_t$ with very high-confidence. As a result, the trajectory found by penalizing ensemble variance \emph{attempts} to extrapolate, cutting across the data hole region where it believes the dynamics is still a single integrator. Yet; the extrapolation \emph{fails} and the method gets stuck at making progress. 

However, SGP avoids this problem by making no assumptions about the extrapolation region, and simply avoiding it. This allows it to escape the pit region and still succeed in reaching the goal. 

This experiment goes to show that attempting extrapolation is a double-edged sword. While the agent may have learned a generalizable form of dynamics that allows it to perform well, it also may not have, and there is no data to test it. Thus, in cases where we do not have enough domain knowledge or inductive bias in the network, it can be a \emph{strength} to avoid extrapolation rather than limitation. 

\begin{figure}[thpb]
	\centering\includegraphics[width = 0.6\textwidth]{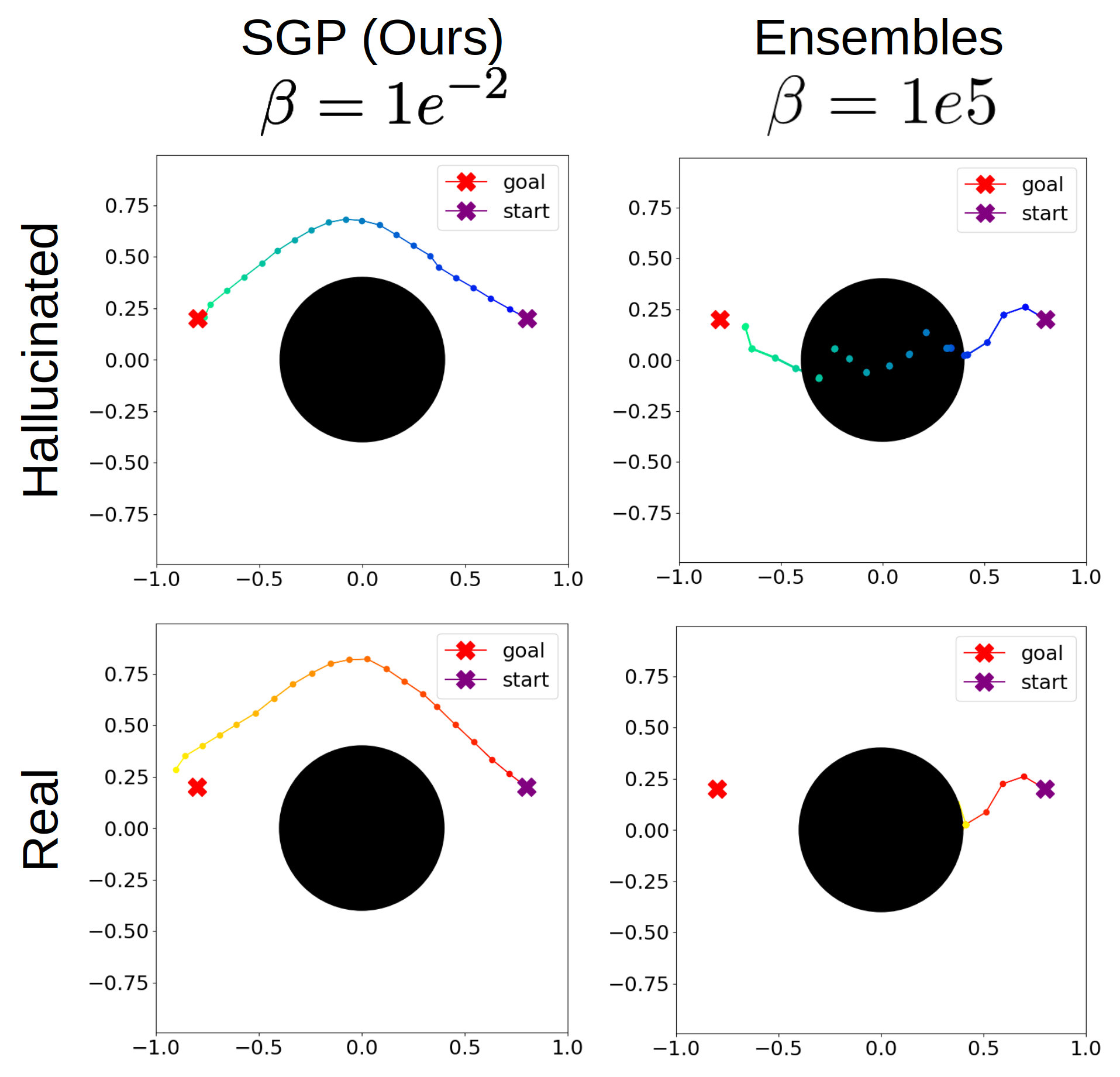}
    \vskip -0.1 true in
    \caption{Comparison of SGP vs. Ensembles variance penalty in the single integrator with data hole example. While SGP successfully avoids the hole, ensembles overconfidently extrapolate and fail to plan a good trajectory that transfers to real-world dynamics.}
	\label{fig:sgpvsensembles}
\end{figure}

\end{document}